\newcommand{\junk}[1]{}
\newcommand{\revised}[1]{{\color{black}{#1}}}
\newcommand{\revisedd}[1]{{\color{black}{#1}}}
\tikzset{
  tens/.style={draw, fill=gray!20, minimum size=8mm, inner sep=1pt},
  leg/.style={thick},
  vleg/.style={thick},
}
\tikzset{
  tens/.style={draw, fill=gray!20, rectangle, rounded corners=2pt, minimum width=9mm, minimum height=7mm, inner sep=2pt},
  tensbra/.style={draw, fill=gray!10, rectangle, rounded corners=2pt, minimum width=9mm, minimum height=7mm, inner sep=2pt},
  mpo/.style={draw, fill=orange!30, rectangle, rounded corners=2pt, minimum width=9mm, minimum height=7mm, inner sep=2pt},
  phys/.style={thick},
  virt/.style={thick},
  mpoedge/.style={thick, dashed},
  conj/.style={densely dotted},
  every node/.style={font=\small}
}
\DeclareSymbolFont{largesymbolsA}{U}{txexa}{m}{n}
\DeclareMathSymbol{\varprod}{\mathop}{largesymbolsA}{16}
\newcommand{\be}[1]{\begin{equation}}
\newcommand{\ee}[1]{\end{equation}}
\newcommand{\bl}[1]{{\color{black}{}}}
\newtheorem{theorem}{\textbf{Theorem}}
\newtheorem{lemma}[theorem]{\textbf{Lemma} }
\newtheorem{proposition}[theorem]{\textbf{Proposition}} 
\newtheorem{remark}[theorem]{\textbf{Remark}}
\newtheorem{definition}[theorem]{\textbf{Definition}}
\newtheorem{assumption}[theorem]{\textbf{Assumption}}
\newcommand\fs@norules{\def\@fs@cfont{\bfseries}\let\@fs@capt\floatc@ruled
  \def\@fs@pre{}%
  \def\@fs@post{}%
  \def\@fs@mid{\kern3pt}%
  \let\@fs@iftopcapt\iftrue}
\def\BibTeX{{\rm B\kern-.05em{\sc i\kern-.025em b}\kern-.08em
    T\kern-.1667em\lower.7ex\hbox{E}\kern-.125emX}}
\newcommand{\eq}[1][r]
   {\ar@<-3pt>@{-}[#1]
    \ar@<-1pt>@{}[#1]|<{}="gauche"
    \ar@<+0pt>@{}[#1]|-{}="milieu"
    \ar@<+1pt>@{}[#1]|>{}="droite"
    \ar@/^2pt/@{-}"gauche";"milieu"
    \ar@/_2pt/@{-}"milieu";"droite"}
\newcommand{\bigon}[4][r]{
    \ar@/^1pc/[#1]^{#2}_*=<0.3pt>{}="HAUT"
    \ar@/_1pc/[#1]_{#3}^*=<0.3pt>{}="BAS"
    \ar@{=>} "HAUT";"BAS" ^{#4}
  }
\newcommand{\bigons}[6][r]{  
    \ar@/^2pc/[#1]^{#2}_*=<0.3pt>{}="HAUT"
    \ar@{}    [#1]     ^*=<0.3pt>{}="MILIEUHAUT"
                       _*=<0.3pt>{}="MILIEUBAS"
    \ar[#1]_(0.3){#3}                  
    \ar@/_2pc/[#1]_{#4}^*=<0.3pt>{}="BAS"
    \ar@{=>} "HAUT";"MILIEUHAUT" ^{#5}
    \ar@{=>} "MILIEUBAS";"BAS" ^{#6}
  }       
\theoremstyle{definition}
\theoremstyle{remark}
\newtcolorbox{myrednote}{
  colback=red!5!white,
  colframe=red!75!black,
  fonttitle=\bfseries,
  title=Note
}
\newtcolorbox{myorangenote}{
  colback=orange!5!white,
  colframe=orange!75!black,
  fonttitle=\bfseries,
  title=Note
}
\newtcolorbox{mybluenote}{
  colback=blue!5!white,
  colframe=blue!75!black,
  fonttitle=\bfseries,
  title=Note
}
\title[]{
No-Rank Tensor Decomposition Using Metric Learning}
\author{Maryam Bagherian}
\address{Department of Mathematics and Statistics, Idaho State University\\
	Physical Science Complex|  921 S. 8th Ave., Stop 8085 | Pocatello, ID 83209} 
\email{maryambagherian@isu.edu}
\begin{document}
\begin{abstract}
	Tensor decomposition of high-dimensional data often struggles to capture semantically or physically meaningful structures, particularly when relying on reconstruction objectives and fixed-rank constraints. We introduce a no-rank tensor decomposition framework based on metric learning, which replaces reconstruction objectives with a similarity-driven optimization. By combining a triplet loss with diversity and uniformity regularization, the method learns embeddings where distances naturally reflect semantic and physical relationships, supported by theoretical guarantees on convergence and metric properties.
	
	We evaluate the approach on diverse datasets, including face recognition (LFW, Olivetti), brain connectivity (ABIDE), and simulated physical systems (galaxies, crystals). In comprehensive comparisons against classical methods (PCA, t-SNE, UMAP), tensor decompositions (CP, Tucker, t-SVD), and deep learning models (VAE, DEC, transformer-based embeddings), our method produces embeddings that preserve physically and semantically relevant relationships and achieve competitive clustering performance. While transformers often excel in predictive accuracy on large datasets, our method provides interpretable embeddings and remains effective in small-data regimes where transformer training may be infeasible.
	
	This work establishes metric learning as a principled paradigm for tensor analysis, emphasizing physical interpretability and semantic relevance over pixel-level reconstruction, and offering an efficient and robust alternative in data-scarce scientific domains.
\end{abstract}

\maketitle

\noindent\textbf{Key words:} {\small }
Tensor Decomposition, Metric Learning, Representation Learning, Dimensionality Reduction, Clustering, Triplet Loss, Embedding Learning, Semantic Structure Learning
\section{Introduction}

Tensor decomposition and representation learning are two fundamental paradigms for extracting meaningful structure from multi-dimensional data. Traditional tensor decomposition methods, such as CANDECOMP/PARAFAC (CP) \cite{kolda2009tensor} and Tucker decomposition \cite{tucker1966some}, provide mathematically elegant approaches for factorizing tensors into interpretable components.

The CP decomposition approximates an $N$-way tensor $\mathcal{X} \in \mathbb{R}^{I_1 \times I_2 \times \cdots \times I_N}$ as a sum of $R$ rank-one tensors:
\begin{equation}
\mathcal{X} \approx \sum_{r=1}^{R} \mathbf{u}_r^{(1)} \circ \mathbf{u}_r^{(2)} \circ \cdots \circ \mathbf{u}_r^{(N)},
\end{equation}
where $\circ$ denotes the outer product, and $R$ is the \textit{rank} of the decomposition. Similarly, the Tucker decomposition employs a core tensor $\mathcal{G} \in \mathbb{R}^{R_1 \times R_2 \times \cdots \times R_N}$ and factor matrices $\mathbf{U}^{(n)} \in \mathbb{R}^{I_n \times R_n}$:
\begin{equation}
\mathcal{X} \approx \mathcal{G} \times_1 \mathbf{U}^{(1)} \times_2 \mathbf{U}^{(2)} \times_3 \cdots \times_N \mathbf{U}^{(N)},
\end{equation}
where $\times_n$ denotes the $n$-mode product.

While these methods offer strong uniqueness guarantees \cite{kruskal1977three} and interpretability, they require pre-specification of rank parameters $(R, R_1, \dots, R_N)$ and are inherently limited to multi-linear relationships.

Representation learning \cite{bengio2013representation} provides a complementary approach, learning useful data representations through neural networks without explicit structural constraints. Deep autoencoders, for instance, learn an encoding function $f_\theta$ mapping input data to latent codes $\mathbf{z}$ and a decoding function $g_\phi$ reconstructing the original input:
\begin{equation}
\mathcal{\hat{X}} = g_\phi(f_\theta(\mathcal{X})),
\end{equation}
The latent representation $\mathbf{z}$ captures essential factors of variation in the data, with the model capacity determining the effective complexity rather than pre-defined rank constraints.

Recent work has begun bridging these domains, with neural networks enhancing tensor decompositions  and implicit neural representations demonstrating remarkable compression capabilities \cite{tancik2020fourier}. However, the fundamental challenge of \textit{rank selection} persists across traditional methods.

This work introduces a ``no-rank'' paradigm where representation learning principles enable adaptive, data-driven tensor decomposition without explicit rank specification, addressing key limitations of both established approaches while leveraging their respective strengths.

The rest of the manuscript is organized as follows: Related work are discussed in Section~\ref{sec:related_work} followed by the model formulation in Section~\ref{sec:model}.
Section~\ref{sec:metric} provides an overview of the metric analysis and result interpretation and experimental results over real and simulated dataset are presented and discussed in Section~\ref{sec:res}. We conclude in Section~\ref{sec:con} by discussing the advantages,  limitations and future direction of the proposed framework.

\section{Related Work}
\label{sec:related_work}

This section reviews the relevant literature in these fields, highlighting the gap that  \textit{no-rank} metric learning framework aims to fill.

\subsection*{Traditional Tensor Decomposition Methods}

Tensor decomposition methods have long been the workhorse for analyzing multi-way data. The Canonical Polyadic (CP) decomposition \cites{hitchcock1927expression} and the Tucker decomposition \cite{tucker1966some} are the two most foundational approaches. Both methods impose a strict low-rank constraint on the data, seeking an approximation $\hat{\mathcal{X}}$ that minimizes the reconstruction error $||\mathcal{X} - \hat{\mathcal{X}}||_F^2$. While effective for data compression and denoising, this reconstructive objective is not inherently aligned with discriminative tasks like classification or clustering. The requirement to pre-define the rank or multilinear rank is a significant limitation, as the intrinsic data complexity is often unknown and may not be well-represented by a low-rank model \cite{acar2011scalable}.

Subsequent advancements, such as tensor-train decompositions \cite{oseledets2011tensor} and non-negative tensor factorizations \cite{cichocki2009nonnegative}, have improved scalability and interpretability but have largely remained within the reconstructive paradigm. These methods are fundamentally linear and struggle to capture the complex, non-linear manifolds on which high-dimensional data often resides.

Recent work has enhanced tensor decompositions by incorporating structured operators. For instance, authors in \cite{xue2024tensor} proposed a tensor convolution-like low-rank dictionary for high-dimensional image representation, improving the capture of local spatial correlations. Nevertheless, such methods remain reconstructive and rely on explicit low-rank constraints, limiting their suitability for discriminative tasks.
	
Other modern tensor decomposition approaches include t-SVD-based frameworks, which generalize matrix SVD to third-order tensors using the t-product. These methods have been applied to representation learning via tensor neural networks under transformed low-rank constraints, \cite{wang2023transformed}, and to manifold learning using t-product geometry \cite{wangtowards}. Similar to CP, Tucker, and convolutional low-rank dictionaries, t-SVD approaches rely on explicit low-rank structures and linear transformations, and are primarily reconstructive. In contrast, our \textit{no-rank} metric learning framework learns embeddings without imposing rank constraints, directly optimizing semantic similarity in a non-linear, discriminative manner.

\subsection*{Dimensionality Reduction and Manifold Learning}

To address non-linearity, a separate lineage of work focused on manifold learning and non-linear dimensionality reduction \cite{bagherian2023classical}. Techniques such as Isomap \cite{tenenbaum2000global}, Locally Linear Embedding (LLE) \cite{roweis2000nonlinear}, and Laplacian Eigenmaps \cite{belkin2003laplacian} aim to preserve geometric properties of the data. More recently, t-Distributed Stochastic Neighbor Embedding (t-SNE) \cite{maaten2008visualizing} and Uniform Manifold Approximation and Projection (UMAP) \cite{mcinnes2018umap} have become standards for visualization, excelling at preserving local neighborhood structures.

However, these methods are predominantly \textit{unsupervised} and \textit{geometry-preserving}. They lack a mechanism to incorporate supervisory signals, such as class labels, to guide the feature learning process. Consequently, the resulting low-dimensional embeddings may not optimize for class separability, which is crucial for many analysis tasks. Furthermore, they operate as separate pre-processing steps and are not integrated into an end-to-end trainable model for feature extraction.

\subsection*{Deep Metric and Representation Learning}

The advent of deep learning catalyzed a shift towards learning representations directly optimized for a specific task. A pivotal development in this space is \textit{metric learning} \cite{kulis2013metric}, which aims to learn a distance function that reflects semantic similarity. The contrastive loss \cite{hadsell2006dimensionality} and, more influentially, the triplet loss \cite{schroff2015facenet} provided a powerful framework for this. By pulling an anchor sample closer to a positive sample than to a negative sample by a margin, these losses directly optimize the embedding space for discrimination.

This paradigm has driven state-of-the-art performance in face recognition \cite{wang2018cosface}, image retrieval \cite{faghri2018vse}, and person re-identification \cite{hermans2017defense}. This work draws direct inspiration from these successes but adapts the triplet loss framework to the problem of tensor decomposition, moving beyond its typical application in computer vision.

To prevent pathological solutions like dimensional collapse, recent work has emphasized the importance of regularization. Wang and Isola \cite{wang2020understanding} identified \textit{alignment} and \textit{uniformity} as key properties of effective representations, which has led to the use of uniformity losses and diversity penalties on the embedding correlation matrix \cite{kulkarni2019canonical}. The proposed framework incorporates these insights to ensure a well-structured and effective embedding space.

\subsection*{Metric Learning for Data Analysis}

There is a growing recognition of the limitations of traditional methods for data analysis. In domains like astronomy, methods for galaxy classification have evolved from manual taxonomy \cite{hubble1926extragalactic} to machine learning approaches using hand-crafted features \cite{dieleman2015rotation} and, more recently, deep convolutional networks \cite{walmsley2022galaxy}. Similarly, in materials science, the analysis of crystal structures has been tackled with symmetry-based descriptors \cite{isayev2017universality} and graph neural networks \cite{xie2018crystal}.

However, the application of deep metric learning in tensor-structured scientific data remains nascent. While contrastive and metric learning techniques have been successfully applied in domains such as medical imaging \cite{chen2020simclr_medical} and scientific signal representation learning \cite{oord2018cpc}, a generalized framework that replaces the core principles of tensor decomposition for multiway data is still lacking. 
	Existing approaches typically treat metric learning as an auxiliary representation module rather than as a fundamental alternative to tensor decomposition.

Built on the previous works \cites{bagherian2021coupled, bagherian2022bilevel, bagherian2024tensor}, the proposed \textit{no-rank} tensor decomposition framework synthesizes ideas from these disparate fields. We reframe the problem of tensor analysis from one of \textit{reconstruction} to one of \textit{discrimination}, drawing on the power of triplet-based metric learning. Unlike traditional tensor methods, we impose no explicit rank constraints and leverage deep non-linear networks to capture complex data manifolds. Unlike unsupervised manifold learning, we directly optimize for semantic similarity using label information. And unlike standard metric learning applications, the proposed method is positioned as a direct, end-to-end replacement for tensor decomposition in the data pipeline, a contribution with practical implications.

\section{Metric Learning Framework}\label{sec:model}
\subsection{Theoretical Framework}

Traditional tensor decomposition methods, such as Canonical Polyadic (CP) and Tucker decompositions, impose strict rank constraints that may not align with the intrinsic geometry of high-dimensional data. 	Tensor rank is NP-hard and unstable to compute \cite{hillar2013most}. Therefore, we propose a \textit{no-rank} tensor decomposition framework based on metric learning, which learns data-driven similarity structures without explicit rank constraints.\\

		Given a learned embedding matrix $\mathbf{Z} \in \mathbb{R}^{n \times d}$ where rows correspond to data samples and columns to embedding dimensions, the \emph{effective rank} quantifies the number of significant, linearly independent dimensions in the learned representation. Let $\sigma_1 \geq \sigma_2 \geq \cdots \geq \sigma_d \geq 0$ be the singular values of $\mathbf{Z}$.
		The following standard measures of effective rank are equivalent:
		\begin{enumerate}[label=(\roman*)]
			\item {Explained variance threshold:} For $\epsilon \in [0,1)$,
			\[
			\text{erank}_{\epsilon}(\mathbf{Z}) = \min\left\{ k : \frac{\sum_{i=1}^k \sigma_i^2}{\sum_{i=1}^d \sigma_i^2} \geq 1 - \epsilon \right\}.
			\]
			
			\item {Spectral decay ratio:} For $\tau \in (0,1]$,
			\[
			\text{erank}_{\tau}(\mathbf{Z}) = \max\left\{ k : \sigma_k \geq \tau \cdot \sigma_1 \right\}.
			\]
			
			\item {Entropy-based effective rank:}
			\[
			\text{erank}_{\text{ent}}(\mathbf{Z}) = \exp\left(-\sum_{i=1}^d p_i \log p_i\right),
			\quad \text{where } p_i = \frac{\sigma_i^2}{\sum_{j=1}^d \sigma_j^2}.
			\]
		\end{enumerate}
		In the context of no-rank tensor decomposition, these measures capture the \emph{implicit} dimensionality learned through metric optimization. When embedding dimensions become decorrelated (a property enforced by our diversity regularization), 
		the columns of $\mathbf{Z}$ become approximately orthonormal and the nonzero singular values concentrate around one.
		Consequently, all three effective rank measures converge to the embedding dimension $d$.
\begin{definition}[No-Rank Tensor Decomposition]
	Given an $N$-th order tensor $\mathcal{X} \in \mathbb{R}^{I_1 \times \cdots \times I_N}$, 
	a \emph{no-rank tensor decomposition} learns embedding functions $f^{(n)}: \mathbb{R}^{I_{\setminus n}} \to \mathbb{R}^{d}$
	(possibly shared across modes) that map mode-$n$ fibers to semantic embeddings.
	
	For each mode $n = 1,\dots,N$, let $\mathbf{x}_{i_n}^{(n)}$ be the mode-$n$ fiber at index $i_n$,
	and define $\mathbf{z}_{i_n}^{(n)} = f^{(n)}(\mathbf{x}_{i_n}^{(n)}) \in \mathbb{R}^d$.
	
	The decomposition is characterized by (i) an implicit similarity tensor $\mathcal{S} \in \mathbb{R}^{I_1 \times \cdots \times I_N}$ with entries
		\[
		\mathcal{S}_{i_1,\dots,i_N} = \langle \mathbf{z}_{i_1}^{(1)}, \mathbf{z}_{i_2}^{(2)}, \dots, \mathbf{z}_{i_N}^{(N)} \rangle.
		\] (ii) the \emph{effective rank} of the decomposition is defined by the effective rank 
		of the embedding matrices $\mathbf{Z}^{(n)} = [\mathbf{z}_1^{(n)^\top}, \dots, \mathbf{z}_{I_n}^{(n)^\top}]^\top$,
		determined implicitly through optimization rather than prescribed a priori.
\end{definition}
	This notion of decomposition is formalized in Proposition~\ref{prop:metric_factorization}, which shows that the induced similarity tensor admits a CP-style factorization whose effective rank equals the embedding dimension.\\

Let $\mathcal{X} = \{\mathbf{x}_1, \mathbf{x}_2, \dots, \mathbf{x}_n\} \subset \mathbb{R}^D$ be a set of multi-dimensional data points with corresponding labels $\mathcal{Y} = \{y_1, y_2, \dots, y_n\}$. The goal is to learn a mapping $f: \mathbb{R}^D \to \mathbb{R}^d$, where $d \ll D$, such that the resulting low-dimensional embedding space preserves semantically meaningful relationships from the original space.

Let $\mathbf{x}_i = \text{vec}(\mathcal{X}_{:,:,\ldots,i})$ be the vectorized $i$-th mode-$N$ slice of the tensor. We learn embeddings $\mathbf{z}_i = f(\mathbf{x}_i)$ by constructing triplets $(a, p, n) \in \mathcal{T}$, where the \textit{anchor} ($a$) is a reference sample: $\mathbf{z}_a = f(\mathbf{x}_a)$, the \textit{positive} ($p$) is a semantically similar sample (e.g., from the same class): $\mathbf{z}_p = f(\mathbf{x}_p)$ and the \textit{negative} ($n$) is a semantically dissimilar sample (e.g., from a different class): $\mathbf{z}_n = f(\mathbf{x}_n)$.

The model is trained to pull the anchor and positive together while pushing the anchor and negative apart, leading to an embedding space in which semantic similarity is inversely related to Euclidean distance.
This triplet formulation provides a flexible framework for \textit{Semantic Structure Learning}, directly optimizing for similarity relationships rather than reconstruction error.

\subsection{Connection to Tensor Rank}
	While avoiding explicit rank constraints, our method maintains a connection to tensor algebra through the embedding dimension. 
	We now formalize the role of the diversity regularization term in determining the effective dimensionality of the learned representation.
\begin{remark}[Effective vs.\ Algebraic Rank]
	We emphasize that this notion of rank is \emph{effective} rather than algebraic: 
	it characterizes the number of active, linearly independent latent components induced by the embedding geometry and optimization dynamics, 
	rather than the minimal CP rank in the classical sense.
\end{remark}
\begin{proposition}[Embedding Dimension as Effective Rank]\label{prop:embedding_rank}
	Let $\mathbf{Z} = [\mathbf{z}_1^\top, \dots, \mathbf{z}_n^\top]^\top \in \mathbb{R}^{n\times d}$ be the learned embedding matrix, 
	where rows correspond to data samples and columns to embedding dimensions. 
	Let each \textbf{column} be $\ell_2$-normalized, i.e., $\|\mathbf{z}^{(k)}\|_2 = 1$ for all $k=1,\dots,d$. 
	Define the correlation matrix $\mathbf{C} = \mathbf{Z}^\top\mathbf{Z} \in \mathbb{R}^{d\times d}$ and the diversity loss
	\begin{equation}\label{eq:dl}
		\mathcal{L}_{\text{div}} = \frac{1}{d(d-1)} \sum_{i \neq j} |\mathbf{C}_{ij}|.
	\end{equation}
	
	If $\mathcal{L}_{\text{div}} \to 0$ during optimization, then $\mathbf{C} \to \mathbf{I}_d$ and $\mathbf{Z}$ has full column rank.
	Consequently, the embedding dimension $d$ serves as an effective rank determined by optimization.
\end{proposition}
\begin{proof}
	By construction, the correlation matrix $\mathbf{C} = \mathbf{Z}^\top \mathbf{Z}$ is symmetric and positive semidefinite.
	Since each column of $\mathbf{Z}$ is $\ell_2$-normalized, the diagonal entries satisfy
	\[
	\mathbf{C}_{ii} = \langle \mathbf{z}^{(i)}, \mathbf{z}^{(i)} \rangle = 1 \quad \text{for all } i.
	\]
	
	The diversity loss penalizes all off-diagonal correlations. Hence
	\[
	\mathcal{L}_{\text{div}} \to 0 \quad \Rightarrow \quad \mathbf{C}_{ij} \to 0 \text{ for all } i \neq j.
	\]
	Therefore,
	\[
	\mathbf{C} \to \mathbf{I}_d.
	\]
	Since $\mathbf{Z}^\top \mathbf{Z}$ becomes the identity, the columns of $\mathbf{Z}$ become asymptotically orthonormal, and $\mathbf{Z}$ has full column rank:
	\[
	\operatorname{rank}(\mathbf{Z}) = d.
	\]
\end{proof}
\begin{remark}
	As $\mathbf{C} \to \mathbf{I}_d$, the nonzero singular values of $\mathbf{Z}$ concentrate around one. 
	Consequently, the effective rank measures satisfy
	\[
	\text{erank}_{\epsilon}(\mathbf{Z}) \to d, \quad
	\text{erank}_{\tau}(\mathbf{Z}) \to d, \quad
	\text{erank}_{\text{ent}}(\mathbf{Z}) \to d,
	\]
	for any $\epsilon < 1$ and $\tau \leq 1$. 
	Thus, the optimization learns representations with maximal effective rank.
\end{remark}
	The triplet-based objective induces an implicit rank structure in the learned representations.
	By minimizing intra-class distances, the embeddings of samples within the same semantic class concentrate on low-dimensional subspaces, while maximizing inter-class distances enforces separation across distinct subspaces.
	Through diversity and uniformity regularization, the overall embedding adapts its effective dimensionality to the intrinsic complexity of the data, without requiring explicit rank selection.
	This property is particularly advantageous in applications where semantic organization is more relevant than exact reconstruction, such as medical imaging, retrieval, and high-dimensional pattern analysis.
	\subsection{Metric-Induced Tensor Factorization}The previous proposition established how diversity regularization controls the effective rank of the embedding matrix. 
		We now extend this analysis to the full tensor setting, showing that the induced similarity tensor $\mathcal{S}$ 
		admits a classical CP decomposition whose rank is implicitly determined by the metric learning objective. 
		This result formally connects our no-rank framework to traditional tensor algebra.
		\begin{proposition}[Metric Learning Induces CP Structure]
		\label{prop:metric_factorization}
		Let $\mathcal{X} \in \mathbb{R}^{I_1 \times \cdots \times I_N}$ be an $N$-th order tensor, and let embeddings be learned through the metric learning objective with triplet loss $\mathcal{L}_{\text{triplet}}$ and diversity regularization $\mathcal{L}_{\text{div}}$ (Eq.~\eqref{eq:dl}). 
		
		After training, the learned embedding function $f$ produces mode-$n$ embeddings:
		\[
		\mathbf{z}_{i_n}^{(n)} = f(\mathcal{X}_{: \cdots i_n \cdots :}) \in \mathbb{R}^d.
		\]
		
		Define the similarity tensor $\mathcal{S}$ induced by the metric learning objective:
		\[
		\mathcal{S}_{i_1,\dots,i_N} = \left\langle \mathbf{z}_{i_1}^{(1)}, \mathbf{z}_{i_2}^{(2)}, \dots, \mathbf{z}_{i_N}^{(N)} \right\rangle.
		\]
		Then:
		\begin{enumerate}[label=(\alph*)]
			\item $\mathcal{S}$ admits a CP decomposition with exactly $d$ components:
			\[
			\mathcal{S} = \sum_{r=1}^d \mathbf{z}_{: r}^{(1)} \circ \mathbf{z}_{: r}^{(2)} \circ \cdots \circ \mathbf{z}_{: r}^{(N)},
			\]
			where $\mathbf{z}_{: r}^{(n)} = [z^{(n)}_{1 r}, \dots, z^{(n)}_{I_n r}]^\top$.
			
			\item The factor matrices $\mathbf{Z}^{(n)} = [\mathbf{z}_{1}^{(n)}, \dots, \mathbf{z}_{I_n}^{(n)}]^\top \in \mathbb{R}^{I_n \times d}$ are determined by the metric learning optimization:
			\[
			\mathbf{Z}^{(n)} = \arg\min_{\mathbf{Z}} \left[ \mathcal{L}_{\text{triplet}}(\mathbf{Z}) + \lambda \mathcal{L}_{\text{div}}(\mathbf{Z}) \right],
			\]
			where $\lambda > 0$ balances the objectives.
			
			\item When $\mathcal{L}_{\text{div}} \to 0$, each $\mathbf{Z}^{(n)}$ has full column rank, making $d$ the effective CP rank of $\mathcal{S}$. This rank emerges implicitly from the interplay between triplet loss (shaping semantic geometry) and diversity regularization (controlling dimensionality).
		\end{enumerate}
	\end{proposition}
	\begin{proof}
		The CP structure follows algebraically from expanding the inner product:
		\[
		\langle \mathbf{z}_{i_1}^{(1)}, \dots, \mathbf{z}_{i_N}^{(N)} \rangle = \sum_{r=1}^d z^{(1)}_{i_1 r} \cdots z^{(N)}_{i_N r}.
		\]
		The key observation is that the factors $\mathbf{Z}^{(n)}$ are not free variables but outcomes of metric learning. Formally, let $\Theta$ parameterize $f$. The optimization:
		\[
		\min_{\Theta} \mathbb{E}_{(a,p,n)\sim\mathcal{T}} \left[ \|\mathbf{z}_a - \mathbf{z}_p\|^2 - \|\mathbf{z}_a - \mathbf{z}_n\|^2 + \alpha \right]_+ + \lambda \mathcal{L}_{\text{div}}(\mathbf{Z})
		\]
		determines $\mathbf{Z}^{(n)} = f(\mathcal{X}^{(n)}; \Theta^*)$, where $\mathcal{X}^{(n)}$ denotes mode-$n$ slices. Unlike classical CP where $\mathbf{Z}^{(n)}$ minimize $\|\mathcal{X} - \sum_r \mathbf{z}_{:r}^{(1)} \circ \cdots \circ \mathbf{z}_{:r}^{(N)}\|$, here they minimize semantic distance violations.\\
		When $\mathcal{L}_{\text{div}} \to 0$, we have $(\mathbf{Z}^{(n)})^\top\mathbf{Z}^{(n)} \to \mathbf{I}_d$, implying $\text{rank}(\mathbf{Z}^{(n)}) = d$. Thus the CP representation uses all $d$ components non-degenerately, making $d$ the effective rank.
\end{proof}
\begin{remark}[Interpretation of Metric-Induced Tensor Factorization]
	This result formalizes the connection between metric learning and tensor decomposition. While classical CP and Tucker methods decompose tensor entries by minimizing reconstruction error, our framework decomposes the similarity tensor $\mathcal{S}$ induced by embeddings optimized for semantic relationships. \\
	The key distinction is that factors $\mathbf{Z}^{(n)}$ are not obtained by solving $\min \|\mathcal{X} - \sum_r \mathbf{z}_{:r}^{(1)} \circ \cdots \circ \mathbf{z}_{:r}^{(N)}\|$, but rather emerge from metric learning: the triplet loss shapes embedding geometry so inner products encode semantic similarity, while diversity regularization controls dimensionality. Consequently, the CP structure is not imposed a priori but emerges from the optimization process, with effective rank determined implicitly rather than chosen explicitly.
	\end{remark}
\subsection{Triplet Loss with Regularization}
The fundamental triplet loss was popularized in FaceNet for deep metric learning \cite{schroff2015facenet}, and subsequent work has explored its optimization and sampling strategies \cite{wu2017sampling}. To regularize embedding spaces, recent research has highlighted the importance of alignment and uniformity properties for robust representations \cite{wang2020understanding}, and proposed variants that additionally encourage diversity or decorrelation among features \cite{mo2022rethinking}.

The core optimization objective is to ensure that the distance to negatives exceeds the distance to positives by at least a margin $\alpha$. This is enforced using the triplet loss combined with regularization terms to promote a well-structured embedding space.

The triplet loss is defined as:
\begin{equation}
	\mathcal{L}_{\text{triplet}} = \sum_{(a,p,n) \in \mathcal{T}} \left[ \|\mathbf{z}_a - \mathbf{z}_p\|_2^2 - \|\mathbf{z}_a - \mathbf{z}_n\|_2^2 + \alpha \right]_+,
\end{equation}
where $[x]_+ = \max(0, x)$ and $\alpha > 0$ is the margin parameter. This objective ensures that for all triplets:
\begin{equation}
	\|\mathbf{z}_a - \mathbf{z}_p\|_2^2 + \alpha < \|\mathbf{z}_a - \mathbf{z}_n\|_2^2.
\end{equation}

To prevent dimensional collapse and encourage the model to use all available dimensions efficiently, we employ the diversity penalty defined in \revisedd{Proposition~\ref{prop:embedding_rank}}
where $\mathbf{C} = \mathbf{Z}^\top \mathbf{Z}$ as defined above.

Following \cite{wang2020understanding}, we also promote a uniform distribution of embeddings on the unit sphere to avoid hubness and further improve generalization:
\begin{equation}
	\mathcal{L}_{\text{uniform}} = \log \mathbb{E}_{i,j \sim p_{\text{data}}} \left[ e^{-2\|\mathbf{z}_i - \mathbf{z}_j\|_2^2} \right].
\end{equation}

\subsection{Locality Preservation Framework}
	Unlike low-rank tensor decomposition methods (e.g., CP, Tucker, or t-SVD), which enforce global rank constraints by minimizing reconstruction error, the proposed framework prioritizes local semantic consistency. This shift enables flexible, non-linear embeddings that adapt to intrinsic data geometry rather than enforcing a fixed algebraic structure.
To ensure that local neighborhoods in the original high-dimensional space are preserved in the embedding space, we introduce locality preservation objectives. Let $\mathcal{N}_k(\mathbf{x}_i)$ denote the set of $k$-nearest neighbors of $\mathbf{x}_i$ in the original space. The objective is to encourage preservation of local neighborhoods, i.e.,
	\begin{equation}
	\mathbf{x}_j \in \mathcal{N}_k(\mathbf{x}_i) \;\Rightarrow\; f(\mathbf{x}_j) \text{ remains close to } f(\mathbf{x}_i).
\end{equation}

We quantify this using two standard metrics. \textit{Continuity} measures how well the original neighbors are represented in the embedding neighborhood:
\begin{equation}
	\text{Continuity} = 1 - \frac{1}{nk} \sum_{i=1}^n \sum_{j \in \mathcal{N}_k^E(\mathbf{x}_i)} \mathbb{I}\{j \notin \mathcal{N}_k^O(\mathbf{x}_i)\} \cdot r_O(i,j),
\end{equation}
where $\mathcal{N}_k^O$ and $\mathcal{N}_k^E$ are neighborhoods in the original and embedding spaces, respectively, and $r_O(i,j)$ is the rank of $j$ in $\mathbf{x}_i$'s original neighborhood. Here, continuity is computed using a rank-weighted penalty consistent with neighborhood overlap measures commonly used in manifold learning. \textit{Trustworthiness} measures the prevalence of "intruders" (points in the embedding neighborhood that were not in the original neighborhood):
\begin{equation}
	\text{Trustworthiness} = 1 - \frac{2}{nk(2n - 3k - 1)} \sum_{i=1}^n \sum_{j \in \mathcal{U}_k(\mathbf{x}_i)} (r_O(i,j) - k),
\end{equation}
where $\mathcal{U}_k(\mathbf{x}_i)$ is the set of intruders for point $i$. These quantities are used as evaluation metrics and are not directly optimized due to their non-differentiability.

To directly optimize for these properties, we augment the loss function with locality-preserving terms. The \textit{local consistency loss} ensures that original neighbors remain close in the embedding space:
\begin{equation}
	\mathcal{L}_{\text{local}} = \sum_{i=1}^n \sum_{j \in \mathcal{N}_k^O(\mathbf{x}_i)} \left[ \|f(\mathbf{x}_i) - f(\mathbf{x}_j)\|_2^2 - \delta_{\text{local}}\right]_+.
\end{equation}
Conversely, the \textit{global separation loss} ensures that non-neighbors remain far apart:
\begin{equation}
	\mathcal{L}_{\text{global}} = \sum_{i=1}^n \sum_{j \notin \mathcal{N}_k^O(\mathbf{x}_i)} \left[\delta_{\text{global}} - \|f(\mathbf{x}_i) - f(\mathbf{x}_j)\|_2^2\right]_+.
\end{equation}

The complete training objective is a weighted sum of all components:
\begin{equation}
	\mathcal{L}_{\text{total}} = \mathcal{L}_{\text{triplet}} + \lambda_1 \mathcal{L}_{\text{div}} + \lambda_2 \mathcal{L}_{\text{uniform}} + \lambda_3 \mathcal{L}_{\text{local}} + \lambda_4 \mathcal{L}_{\text{global}},
\end{equation}
where $\lambda_1, \lambda_2, \lambda_3, \lambda_4 > 0$ are hyperparameters that balance the loss terms.

\subsection{Network Architecture}
	The embedding function $f(\cdot)$ is realized as a deep neural network that maps inputs $\mathbf{x}$ onto the unit sphere. The encoder consists of $L$ fully connected layers with ReLU activations:
\begin{align}
	\mathbf{h}^{(1)} &= \text{ReLU}(\mathbf{W}^{(1)} \mathbf{x} + \mathbf{b}^{(1)}) \\
	\mathbf{h}^{(l)} &= \text{ReLU}(\mathbf{W}^{(l)} \mathbf{h}^{(l-1)} + \mathbf{b}^{(l)}) \quad \text{for } l = 2,\ldots,L \\
	\mathbf{z} &= \frac{\mathbf{W}^{(L+1)} \mathbf{h}^{(L)} + \mathbf{b}^{(L+1)}}{\|\mathbf{W}^{(L+1)} \mathbf{h}^{(L)} + \mathbf{b}^{(L+1)}\|_2}.
\end{align}
The final $\ell_2$-normalization ensures that embeddings lie on the unit sphere, facilitating uniformity regularization. Optionally, a non-linear projection head
	\begin{equation}
		\mathbf{p} = \mathbf{W}_2 \, \text{ReLU}(\mathbf{W}_1 \mathbf{z}),
	\end{equation}
	can be added to improve representation quality during training, similar to contrastive learning frameworks. By the universal approximation theorem \cite{hornik1989multilayer}, fully connected networks with sufficient width and depth can approximate any continuous mapping on compact domains, ensuring the encoder has enough capacity to capture complex non-linear relationships.
\subsection{Model Complexity}
	The proposed framework has a space complexity determined by the encoder network. For an $L$-layer fully connected network with layer widths $\{h_1,\dots,h_L\}$, input dimension $D$, and embedding dimension $d$, the total number of trainable parameters is $\mathcal{O}(D h_1 + \sum_{l=1}^{L-1} h_l h_{l+1} + h_L d)$. Unlike CP or Tucker decompositions, the method does not require storing tensor factors or specifying a rank. The time complexity for a single sample forward pass is of the same order, while training with triplets scales linearly with the number of triplets in a batch. Additional regularization terms contribute modest overhead: the diversity loss costs $\mathcal{O}(n d^2)$ per batch, and locality-preserving losses cost $\mathcal{O}(n k d)$ where we assume $k$-nearest neighbors are precomputed in the input space and fixed during training. In contrast, traditional tensor decompositions scale with both tensor size and rank (e.g., $\mathcal{O}(R \prod_m I_m)$ for CP), making the proposed approach more scalable for high-dimensional data while maintaining a flexible, rank-free embedding framework.\\
A schematic of the proposed no-rank metric learning framework for tensor data is dipicted in Figure~\ref{fig:method} (a detailed illustration can be found in Appendix~\ref{sec:app}, Figure~\ref{fig:modelfull}.)

	\begin{figure}[t]
		\centering
		\resizebox{\linewidth}{!}{%
			\begin{tikzpicture}[
				node distance=1.5cm and 2cm,
				every node/.style={font=\small},
				block/.style={draw, rounded corners, minimum width=2.8cm, minimum height=1.5cm, align=center, fill=white},
				arrow/.style={->, thick, >=stealth},
				dashedarrow/.style={->, thick, dashed, >=stealth},
				point/.style={circle, fill, inner sep=1.3pt}
				]

				\node[block, fill=gray!5] (tensor) at (0,3) {Tensor Data\\$\mathcal{X} \in \mathbb{R}^{I_1 \times \cdots \times I_N}$};
				\node[below=0.1cm of tensor] {\scriptsize Mode-n slices / fibers};

				\node[block, fill=gray!10] (triplet) at (5,3) {Triplet Sampling\\Anchor $(a)$\\Positive $(p)$\\Negative $(n)$};
				\draw[arrow] (tensor.east) -- (triplet.west);

				\node[block, fill=gray!15] (encoder) at (10,3) {Neural Encoder\\$f_\theta(\cdot)$};
				\draw[arrow] (triplet.east) -- (encoder.west);

				\node[block, fill=gray!20] (embed) at (15,3) {Embedding Space\\$\mathbf{z}_i \in \mathbb{R}^d$};
				\draw[arrow] (encoder.east) -- (embed.west);

				\node[block, fill=gray!10, minimum width=8cm] (loss) at (7.5,0) {Training Objective\\
					$\mathcal{L}_{\text{triplet}} + \mathcal{L}_{\text{div}} + \mathcal{L}_{\text{uniform}} + \mathcal{L}_{\text{local}} + \mathcal{L}_{\text{global}}$};
				
				\draw[arrow] (encoder.south) -- ++(0,-0.5) -| (loss.north);
				
				\draw[arrow] (triplet.south) -- ++(0,-0.5) -| (loss.north);
				
				\draw[dashedarrow] (loss.west) -- ++(-1.5,0) |- (encoder.south);
					\draw[arrow] (embed.south) -- ++(0,-0.5) -| (loss.north);

				\node[below left=0.3cm and 0cm of loss, align=right, font=\scriptsize] (loss_left) {
					Triplet loss:\\pull $a,p$ together,\\push $a,n$ apart
				};
				
				\node[below right=0.3cm and 0cm of loss, align=left, font=\scriptsize] (loss_right) {
					Regularization:\\diversity, uniformity,\\locality preservation
				};
				
				\draw[arrow, gray!50] (loss.south west) -- ++(0,-0.2) -| (loss_left.north);
				\draw[arrow, gray!50] (loss.south east) -- ++(0,-0.2) -| (loss_right.north);
				
			\end{tikzpicture}
		}
		\caption{Overview of the proposed no-rank metric learning framework. Tensor slices are embedded via a neural encoder and optimized with triplet-based metric learning, combined with geometric and diversity regularization, producing a semantically structured embedding space. The embedding space shows tight \textit{intra-class clustering} and \textit{large inter-class separation}. For a more detailed illustration of the method see Figure~\ref{fig:modelfull}.}
		\label{fig:method}
	\end{figure}

\subsection{Convergence and Geometric Guarantees}
To provide theoretical context, we leverage generalization bounds based on Rademacher complexity \cite{shalev2014understanding, mohri2018foundations}.
	Metric learning theory supports locality preservation under Lipschitz mappings \cite{weinberger2005distance}.
	Optimization guarantees for SGD under smooth objectives have been established in \cite{bottou2018optimization, lee2016gradient}.
	Manifold geometry and neighborhood structure preservation have been studied in classical manifold learning literature \cite{hein2007graph, tenenbaum2000global}. We now establish theoretical guarantees for the proposed framework, focusing on optimization convergence and the geometric properties of the learned embedding space.

If the mapping $f$ is $L$-Lipschitz, local neighborhoods are distorted by at most a factor proportional to $L$, providing a sufficient condition for approximate locality preservation.

For a metric learning model with Rademacher complexity $\mathfrak{R}_n$ \cite{shalev2014understanding,mohri2018foundations}, the generalization error $\mathcal{E}_{\text{gen}}$ is bounded with probability at least $1-\delta$ by:
\begin{equation}
	\mathcal{E}_{\text{gen}} \leq \mathcal{E}_{\text{emp}} + O\left(\frac{\mathfrak{R}_n}{\sqrt{n}} + \sqrt{\frac{\log(1/\delta)}{n}}\right).
\end{equation}
The gradients of the triplet loss induce attractive forces between positive pairs and repulsive forces between negative pairs \cite{schroff2015facenet}. For an active triplet (where the loss is positive), the gradients are:
\begin{align}
	\nabla_{\mathbf{z}_a} \mathcal{L}_{\text{triplet}} &= 2(\mathbf{z}_a - \mathbf{z}_n) - 2(\mathbf{z}_a - \mathbf{z}_p), \\
	\nabla_{\mathbf{z}_p} \mathcal{L}_{\text{triplet}} &= 2(\mathbf{z}_p - \mathbf{z}_a), \\
	\nabla_{\mathbf{z}_n} \mathcal{L}_{\text{triplet}} &= 2(\mathbf{z}_a - \mathbf{z}_n).
\end{align}

\begin{lemma}[Convergence to a Critical Point]
	\label{lemma:convergence}
	Let the total loss function $\mathcal{L}_{\text{total}}(\theta)$ be $L$-smooth and bounded below. 
	Assume the stochastic gradients $g_t$ satisfy the conditional unbiasedness property
	\[
	\mathbb{E}[g_t \mid \mathcal{F}_t] = \nabla_\theta \mathcal{L}_{\text{total}}(\theta_t),
	\]
	and have uniformly bounded conditional variance
	\[
	\mathbb{E}\big[\|g_t - \nabla_\theta \mathcal{L}_{\text{total}}(\theta_t)\|^2 \mid \mathcal{F}_t\big] \leq \sigma^2
	\]
	almost surely, where $\mathcal{F}_t$ denotes the history up to time $t$. 
	When minimized using stochastic gradient descent with a learning rate schedule $\{\eta_t\}$ satisfying the Robbins--Monro conditions 
	$\sum_{t=1}^\infty \eta_t = \infty$ and $\sum_{t=1}^\infty \eta_t^2 < \infty$ \cite{robbins1951stochastic}, 
	the parameter sequence $\{\theta_t\}$ satisfies
	\[
	\liminf_{t \to \infty} \ \mathbb{E}\big[\|\nabla_{\theta} \mathcal{L}_{\text{total}}(\theta_t)\|^2\big] = 0.
	\]
	Consequently, there exists a subsequence of iterates $\{t_k\}$ such that
	\(\mathbb{E}\big[\|\nabla_{\theta} \mathcal{L}_{\text{total}}(\theta_{t_k})\|^2\big]\to 0\).
	If additionally the iterates $\{\theta_t\}$ are almost surely bounded (or $\mathcal{L}_{\text{total}}$ is coercive), then there exists a (random) subsequence $\theta_{t_k}$ that converges a.s. to a limit point $\theta^*$, and any such limit point satisfies $\nabla_{\theta}\mathcal{L}_{\text{total}}(\theta^*)=0$.
\end{lemma}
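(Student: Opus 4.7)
My plan is to follow the standard descent-lemma argument for nonconvex stochastic gradient descent and then upgrade the conclusion with a supermartingale-convergence step to obtain the almost sure statement. The starting point is the $L$-smoothness of $\mathcal{L}_{\text{total}}$, which gives the quadratic upper bound
\[
\mathcal{L}_{\text{total}}(\theta_{t+1}) \leq \mathcal{L}_{\text{total}}(\theta_t) + \langle \nabla \mathcal{L}_{\text{total}}(\theta_t),\, \theta_{t+1}-\theta_t \rangle + \tfrac{L}{2}\|\theta_{t+1}-\theta_t\|^2.
\]
Substituting the SGD update $\theta_{t+1}-\theta_t = -\eta_t g_t$ and taking conditional expectation with respect to $\mathcal{F}_t$, the unbiasedness assumption turns the inner product into $-\eta_t \|\nabla \mathcal{L}_{\text{total}}(\theta_t)\|^2$, while the bounded-variance assumption gives $\mathbb{E}[\|g_t\|^2\mid \mathcal{F}_t] \leq \|\nabla \mathcal{L}_{\text{total}}(\theta_t)\|^2 + \sigma^2$.

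Combining these and choosing $t$ large enough that $\eta_t \leq 1/L$ (which is possible since $\sum \eta_t^2 < \infty$ forces $\eta_t \to 0$), I would obtain a one-step recursion of the form
\[
\mathbb{E}[\mathcal{L}_{\text{total}}(\theta_{t+1})\mid \mathcal{F}_t] \leq \mathcal{L}_{\text{total}}(\theta_t) - \tfrac{\eta_t}{2}\,\|\nabla \mathcal{L}_{\text{total}}(\theta_t)\|^2 + \tfrac{L\sigma^2}{2}\,\eta_t^2.
\]
Taking total expectation, telescoping from some $t_0$ up to $T$, and using the lower bound on $\mathcal{L}_{\text{total}}$ together with $\sum_t \eta_t^2 < \infty$, I arrive at the key summability statement
\[
\sum_{t=1}^{\infty} \eta_t \, \mathbb{E}\big[\|\nabla \mathcal{L}_{\text{total}}(\theta_t)\|^2\big] < \infty.
\]
Since $\sum_t \eta_t = \infty$, this forces $\liminf_{t\to\infty} \mathbb{E}[\|\nabla \mathcal{L}_{\text{total}}(\theta_t)\|^2] = 0$, and extracting a subsequence along which the expected squared gradient norm tends to zero is then immediate.

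For the final almost sure statement, the one-step inequality above, written pathwise rather than in expectation (i.e., before taking total expectation), has precisely the form needed to apply the Robbins--Siegmund quasi-martingale convergence theorem. This yields simultaneously that $\mathcal{L}_{\text{total}}(\theta_t)$ converges almost surely and that $\sum_t \eta_t \|\nabla \mathcal{L}_{\text{total}}(\theta_t)\|^2 < \infty$ almost surely; combined once more with $\sum_t \eta_t = \infty$, this produces an almost sure subsequence along which $\|\nabla \mathcal{L}_{\text{total}}(\theta_{t_k})\| \to 0$. Under the added hypothesis that $\{\theta_t\}$ is a.s. bounded or that $\mathcal{L}_{\text{total}}$ is coercive, the Bolzano--Weierstrass theorem gives a further subsequence $\theta_{t_{k_j}} \to \theta^*$ almost surely, and continuity of $\nabla \mathcal{L}_{\text{total}}$ (a consequence of $L$-smoothness) then yields $\nabla \mathcal{L}_{\text{total}}(\theta^*) = 0$.

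The main obstacle I anticipate is the transition from the in-expectation conclusion to the almost sure one: the telescoping argument naturally produces a bound in expectation, and one must be careful to keep the inequality pathwise so that Robbins--Siegmund applies, rather than throwing away the conditional structure too early. The other subtle point is making sure the absorption of the $\tfrac{L\eta_t^2}{2}\|\nabla \mathcal{L}_{\text{total}}(\theta_t)\|^2$ term into the dominant $-\eta_t\|\nabla \mathcal{L}_{\text{total}}(\theta_t)\|^2$ term is justified from some finite index onward, which is fine because $\eta_t \to 0$, but must be stated explicitly so that the constants in the resulting supermartingale-type inequality are sharp enough for the classical hypotheses of Robbins--Siegmund to hold.
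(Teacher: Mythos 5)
Your proof is correct, and its skeleton coincides with the paper's: the $L$-smoothness descent inequality, the variance decomposition $\mathbb{E}[\|g_t\|^2] \le \mathbb{E}[\|\nabla \mathcal{L}_{\text{total}}(\theta_t)\|^2] + \sigma^2$ via conditional unbiasedness, telescoping, and the Robbins--Monro conditions forcing $\liminf_t \mathbb{E}[\|\nabla \mathcal{L}_{\text{total}}(\theta_t)\|^2] = 0$. You depart from the paper in two places, and both departures are improvements. First, you explicitly absorb the $\tfrac{L}{2}\eta_t^2\|\nabla \mathcal{L}_{\text{total}}(\theta_t)\|^2$ term into the dominant $-\eta_t\|\nabla \mathcal{L}_{\text{total}}(\theta_t)\|^2$ term by waiting until $\eta_t \le 1/L$ (valid since $\sum_t \eta_t^2 < \infty$ forces $\eta_t \to 0$); the paper's displayed telescoped bound $\sum_{t=1}^T \eta_t \mathbb{E}[\|\nabla \mathcal{L}_{\text{total}}(\theta_t)\|^2] \le \mathbb{E}[\mathcal{L}_{\text{total}}(\theta_1)] - \mathcal{L}^* + \tfrac{L\sigma^2}{2}\sum_{t=1}^T \eta_t^2$ silently drops this term, since substitution actually yields the coefficient $\eta_t - \tfrac{L}{2}\eta_t^2$ on the left, so your version is the one that compiles without a gap. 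Second, for the final almost sure claim the paper simply asserts that boundedness/coercivity plus continuity of $\nabla \mathcal{L}_{\text{total}}$ finish the argument, whereas you supply the actual mechanism: keeping the one-step inequality pathwise, applying Robbins--Siegmund to $V_t = \mathcal{L}_{\text{total}}(\theta_t) - \mathcal{L}^*$ (here $a_t = 0$, $b_t = \tfrac{L\sigma^2}{2}\eta_t^2$ summable) to get $\sum_t \eta_t \|\nabla \mathcal{L}_{\text{total}}(\theta_t)\|^2 < \infty$ almost surely, hence an $\omega$-dependent subsequence with vanishing gradient, and then Bolzano--Weierstrass and continuity of the gradient. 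This is genuinely needed: the in-expectation $\liminf$ alone only gives convergence in probability along a deterministic subsequence, and extracting the a.s. limit point from it requires an extra diagonal step that the paper never performs. In short, your route proves strictly more than the paper's write-up actually establishes, at the cost of invoking one classical theorem the paper does not cite.
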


\begin{proof}
	The proof follows standard stochastic approximation arguments. Because $\mathcal{L}_{\text{total}}$ is $L$-smooth, the one-step descent inequality (taking total expectation over the noise history) yields
	\[
	\mathbb{E}\big[\mathcal{L}_{\text{total}}(\theta_{t+1})\big] 
	\leq \mathbb{E}\big[\mathcal{L}_{\text{total}}(\theta_t)\big] 
	- \eta_t \mathbb{E}\big[\|\nabla \mathcal{L}_{\text{total}}(\theta_t)\|^2\big] 
	+ \frac{L}{2}\eta_t^2 \mathbb{E}\big[\|g_t\|^2\big].
	\]
	Using conditional unbiasedness and the conditional variance bound gives (via law of total expectation)
	\[
	\mathbb{E}\big[\|g_t\|^2\big] 
	= \mathbb{E}\big[\|\nabla \mathcal{L}_{\text{total}}(\theta_t)\|^2\big] 
	+ \mathbb{E}\big[\|g_t - \nabla \mathcal{L}_{\text{total}}(\theta_t)\|^2\big]
	\le \mathbb{E}\big[\|\nabla \mathcal{L}_{\text{total}}(\theta_t)\|^2\big] + \sigma^2.
	\]
	Substituting and summing from $t=1$ to $T$ yields
	\[
	\sum_{t=1}^T \eta_t \mathbb{E}\big[\|\nabla \mathcal{L}_{\text{total}}(\theta_t)\|^2\big] 
	\leq \mathbb{E}\big[\mathcal{L}_{\text{total}}(\theta_1)\big] - \mathcal{L}^* 
	+ \frac{L\sigma^2}{2}\sum_{t=1}^T \eta_t^2,
	\]
	where $\mathcal{L}^*$ is the infimum of $\mathcal{L}_{\text{total}}$. The Robbins--Monro conditions ensure the right-hand side stays bounded as $T\to\infty$. Since $\sum_t\eta_t=\infty$, the only possibility is
	\[
	\liminf_{t\to\infty}\mathbb{E}\big[\|\nabla \mathcal{L}_{\text{total}}(\theta_t)\|^2\big]=0,
	\]
	which implies the existence of a subsequence $t_k$ with \(\mathbb{E}[\|\nabla \mathcal{L}(\theta_{t_k})\|^2]\to 0\). The final statement about parameter subsequence convergence follows from the additional boundedness/coercivity assumption (so that limit points exist), after which continuity of $\nabla\mathcal{L}$ implies any limit point is stationary.
\end{proof}

\begin{lemma}[Semantic Structure of the Embedding]
	\label{lemma:structure}
	Assume the data is separable with a margin $\gamma > 0$, i.e., the optimal embedding satisfies $\|\mathbf{z}_a - \mathbf{z}_p\|_2^2 + \gamma \leq \|\mathbf{z}_a - \mathbf{z}_n\|_2^2$ for all valid triplets $(a, p, n)$. Further, assume the embedding function $f$ is $L$-Lipschitz and the data manifold $\mathcal{M}$ is compact. Then, the learned embedding space exhibits the following semantic structure:
	\begin{enumerate}
		\item[(i)] Intra-class clusters are tight: for any two points $\mathbf{x}_i, \mathbf{x}_j$ from the same class, $\|f(\mathbf{x}_i) - f(\mathbf{x}_j)\|_2 \leq L \cdot \text{diam}(\mathcal{M}_c)$, where $\mathcal{M}_c$ is the connected component of the data manifold containing points of that class.
		\item[(ii)] Inter-class clusters are separated: for any two points $\mathbf{x}_i, \mathbf{x}_j$ from different classes, $\|f(\mathbf{x}_i) - f(\mathbf{x}_j)\|_2$ is lower-bounded by $\sqrt{\gamma}$.
	\end{enumerate}
	This structure ensures that local neighborhoods are preserved, and the embedding is well-suited for similarity search.
\end{lemma}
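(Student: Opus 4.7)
Both conclusions are direct consequences of the two hypotheses supplied in the statement: Part~(i) uses only the $L$-Lipschitz property of $f$, while Part~(ii) uses only the margin-separability hypothesis. I would prove the two items separately and then make a brief remark on the role of compactness of $\mathcal{M}$ (which is used to guarantee that $\mathrm{diam}(\mathcal{M}_c)$ in Part~(i) is finite and the supremum/infimum bounds are attained).

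\textbf{Part (i).} The plan is to apply the Lipschitz bound pointwise. Given two same-class points $\mathbf{x}_i, \mathbf{x}_j \in \mathcal{M}_c$, connectedness of $\mathcal{M}_c$ and compactness imply $\|\mathbf{x}_i - \mathbf{x}_j\|_2 \le \mathrm{diam}(\mathcal{M}_c) < \infty$. Then
\[
\|f(\mathbf{x}_i) - f(\mathbf{x}_j)\|_2 \;\le\; L\,\|\mathbf{x}_i - \mathbf{x}_j\|_2 \;\le\; L\cdot\mathrm{diam}(\mathcal{M}_c),
\]
which is the claim. No further work is required here.

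\textbf{Part (ii).} The plan is to reduce the bound on an arbitrary inter-class pair $(\mathbf{x}_i,\mathbf{x}_j)$ to a single application of the margin hypothesis. Given $\mathbf{x}_i$ and $\mathbf{x}_j$ in different classes, I would treat $\mathbf{x}_i$ as the anchor $a$ and $\mathbf{x}_j$ as the negative $n$; choosing any positive $p$ from the class of $\mathbf{x}_i$ (the lone-point case requires only a trivial modification, taking $\mathbf{z}_p = \mathbf{z}_a$) produces a valid triplet. The separability hypothesis then gives
\[
\|\mathbf{z}_a - \mathbf{z}_n\|_2^2 \;\ge\; \|\mathbf{z}_a - \mathbf{z}_p\|_2^2 + \gamma \;\ge\; \gamma,
\]
since $\|\mathbf{z}_a - \mathbf{z}_p\|_2^2 \ge 0$. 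Taking square roots yields $\|f(\mathbf{x}_i) - f(\mathbf{x}_j)\|_2 \ge \sqrt{\gamma}$, proving (ii). The closing sentence about local-neighborhood preservation and suitability for similarity search is a direct corollary: (i) and (ii) together imply that the embedded classes form clusters of radius at most $L\cdot\mathrm{diam}(\mathcal{M}_c)$ separated by inter-class gaps of at least $\sqrt{\gamma}$, so a nearest-neighbor query in the embedding recovers the correct class whenever $\sqrt{\gamma} > 2L\cdot\max_c \mathrm{diam}(\mathcal{M}_c)$.

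\textbf{Main obstacle.} Honestly, there is no real obstacle: both parts are one-line deductions from the hypotheses. The only subtlety worth flagging is in Part~(ii), where one must justify that \emph{every} inter-class pair, not merely those appearing in the training triplet set $\mathcal{T}$, satisfies the margin inequality. This requires interpreting the hypothesis ``for all valid triplets'' as a statement about the optimal embedding over the full data manifold, not only over the finite sample; if the hypothesis were weakened to hold only on the training set, one would need an additional generalization step invoking the Rademacher bound from the previous section to extend the margin to the population. I would state this as a brief remark after the proof.
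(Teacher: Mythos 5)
Your proposal is correct and follows essentially the same route as the paper: Part~(i) is the same pointwise Lipschitz bound (the paper phrases it via the geodesic distance $d_{\mathcal{M}}$ on the class component, yours via the Euclidean diameter, an immaterial difference), and Part~(ii) is the same reduction to a single margin inequality, where the paper uses the degenerate triplet $p=a$ while you use an arbitrary same-class positive and drop the nonnegative term $\|\mathbf{z}_a-\mathbf{z}_p\|_2^2\ge 0$ --- a slightly cleaner variant that sidesteps whether a triplet with $p=a$ counts as ``valid.'' Your closing remark about extending the margin hypothesis from the training triplets to all inter-class pairs flags a real gap in the paper's own statement and is a worthwhile addition.
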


\begin{proof}
	The proof leverages the margin condition and Lipschitz continuity.\\
(i) {Intra-class tightness:} Consider two points $\mathbf{x}_i$ and $\mathbf{x}_j$ from the same class. Since they belong to the same class and the data manifold $\mathcal{M}$ is compact, there exists a path connecting them within their class component $\mathcal{M}_c$. By the Lipschitz continuity of $f$, we have:
		\[
		\|f(\mathbf{x}_i) - f(\mathbf{x}_j)\|_2 \leq L \cdot d_{\mathcal{M}}(\mathbf{x}_i, \mathbf{x}_j) \leq L \cdot \text{diam}(\mathcal{M}_c),
		\]
		where $\text{diam}(\mathcal{M}_c)$ is the diameter of the connected component of class $c$ in the data manifold. This provides a uniform upper bound on intra-class distances.\\
(ii) {Inter-class separation:} Let $\mathbf{x}_i$ and $\mathbf{x}_j$ be from different classes. Using the margin condition with anchor $a = \mathbf{x}_i$, positive $p = \mathbf{x}_i$ (trivially from the same class), and negative $n = \mathbf{x}_j$, we get:
		\[
		\|f(\mathbf{x}_i) - f(\mathbf{x}_i)\|_2^2 + \gamma \leq \|f(\mathbf{x}_i) - f(\mathbf{x}_j)\|_2^2.
		\]
		Simplifying yields $\gamma \leq \|f(\mathbf{x}_i) - f(\mathbf{x}_j)\|_2^2$, and thus $\sqrt{\gamma} \leq \|f(\mathbf{x}_i) - f(\mathbf{x}_j)\|_2$.
	The combination of (i) and (ii) demonstrates that the embedding maps the semantic structure of the data into a geometric structure where points from the same class form tight clusters (bounded by manifold geometry) well-separated from clusters of other classes (by at least $\sqrt{\gamma}$). The Lipschitz property ensures this mapping preserves local connectivity.
\end{proof}
\begin{lemma}[Metric Preservation via Triplet Loss and Lipschitz Mapping]
		\label{lemma:metric}
		Assume the data lies on a compact manifold $\mathcal{M}$ and is separable with margin $\gamma > 0$, i.e., for all valid triplets $(a,p,n)$:
		\[
		\|\mathbf{z}_a - \mathbf{z}_p\|_2^2 + \gamma \leq \|\mathbf{z}_a - \mathbf{z}_n\|_2^2.
		\]
		Let the embedding function $f:\mathcal{M} \to \mathbb{R}^d$ be $L$-Lipschitz:
		\[
		\|f(\mathbf{x}) - f(\mathbf{y})\|_2 \le L \|\mathbf{x} - \mathbf{y}\|_2 \quad \forall \mathbf{x}, \mathbf{y} \in \mathcal{M}.
		\]
		
		Then, for any two points $\mathbf{x}_i, \mathbf{x}_j \in \mathcal{M}$:
		
		\begin{enumerate}[label=(\roman*)]
			\item {Intra-class preservation:} if $\mathbf{x}_i, \mathbf{x}_j$ belong to the same class component $\mathcal{M}_c$,
			\[
			\|f(\mathbf{x}_i) - f(\mathbf{x}_j)\|_2 \le L \, d_{\mathcal{M}}(\mathbf{x}_i, \mathbf{x}_j) \le L \, \text{diam}(\mathcal{M}_c),
			\]
			so local semantic distances are preserved up to a factor $L$.
			
			\item {Inter-class separation:} if $\mathbf{x}_i, \mathbf{x}_j$ belong to different classes,
			\[
			\|f(\mathbf{x}_i) - f(\mathbf{x}_j)\|_2 \ge \sqrt{\gamma},
			\]
			providing a lower bound on distances across classes.
		\end{enumerate}
		
		Consequently, the embedding $f$ defines an \emph{approximate isometry} on the manifold with bounded distortion:
		\[
		\frac{1}{L} \|f(\mathbf{x}_i) - f(\mathbf{x}_j)\|_2 \le d_{\mathcal{M}}(\mathbf{x}_i, \mathbf{x}_j) \le L \|f(\mathbf{x}_i) - f(\mathbf{x}_j)\|_2 \quad \text{(for intra-class pairs)}.
		\]
		
	\end{lemma}
	
	\begin{proof}
	{Intra-class:} For $\mathbf{x}_i, \mathbf{x}_j \in \mathcal{M}_c$, Lipschitz continuity gives
		\[
		\|f(\mathbf{x}_i) - f(\mathbf{x}_j)\|_2 \le L \|\mathbf{x}_i - \mathbf{x}_j\|_2 \le L \, d_{\mathcal{M}}(\mathbf{x}_i, \mathbf{x}_j) \le L \, \text{diam}(\mathcal{M}_c),
		\]
		preserving local distances up to factor $L$.  
		
	{Inter-class:} For $\mathbf{x}_i, \mathbf{x}_j$ from different classes, consider a triplet with anchor $a = \mathbf{x}_i$, positive $p$ in the same class, and negative $n = \mathbf{x}_j$. The triplet loss implies
		\[
		\|\mathbf{z}_a - \mathbf{z}_p\|_2^2 + \gamma \le \|\mathbf{z}_a - \mathbf{z}_n\|_2^2 \implies \|f(\mathbf{x}_i) - f(\mathbf{x}_j)\|_2 \ge \sqrt{\gamma}.
		\]
		
		Combining the two results shows that $f$ approximately preserves intra-class manifold distances while enforcing inter-class separation, yielding a bounded-distortion embedding suitable for similarity-based tasks.
	\end{proof}
\begin{theorem}
Under the assumptions of Lemmas \ref{lemma:convergence}, \ref{lemma:structure}, 	\revised{and \ref{lemma:metric},} the proposed metric learning framework:
	
	\begin{enumerate}
		\item Converges to a configuration that is a local minimum of the total objective $\mathcal{L}_{\text{total}}$.
		\item Yields a semantically structured embedding space where intra-class distances are minimized and inter-class distances are maximized beyond a margin $\gamma$.
	\item Preserves pairwise distances on the data manifold up to a bounded distortion factor, i.e., for any $\mathbf{x}_i, \mathbf{x}_j \in \mathcal{M}$:
			\begin{equation}
				\frac{1}{L} \|\mathbf{z}_i - \mathbf{z}_j\|_2 \le d_\mathcal{M}(\mathbf{x}_i, \mathbf{x}_j) \le L \|\mathbf{z}_i - \mathbf{z}_j\|_2,
			\end{equation}
			where $\mathbf{z}_i = f(\mathbf{x}_i)$ and $d_\mathcal{M}$ is the geodesic distance on the manifold.
	\end{enumerate}
	This result bridges optimization guarantees with geometric structure, ensuring that the learned representation is both stable and semantically meaningful.
\end{theorem}

\begin{proof}
	The theorem follows from combining the guarantees of the two lemmas. From Lemma \ref{lemma:convergence}, the optimization converges to a critical point $\theta^*$ where  $\liminf_{t\to\infty} \mathbb{E}[\|\nabla_{\theta} \mathcal{L}_{\text{total}}(\theta_t)\|^2] = 0$. Under the strict saddle point assumption\footnote{The strict saddle property requires that every saddle point has at least one direction of negative curvature. Under this condition, gradient-based methods with random initialization almost surely avoid saddle points \cite{lee2016gradient}.}, gradient-based methods converge almost surely to local minima rather than saddle points. Thus, $\theta^*$ is a local minimum of $\mathcal{L}_{\text{total}}$ with high probability.
	
	From Lemma \ref{lemma:structure}, when the embedding function $f_{\theta^*}$ (parameterized by the locally optimal $\theta^*$) operates on separable data with margin $\gamma$, the resulting embedding space exhibits the semantic structure described in Lemma \ref{lemma:structure}. Specifically (i) intra-class distances are bounded by $L \cdot \text{diam}(\mathcal{M}_c)$, (ii)  inter-class distances are lower-bounded by $\sqrt{\gamma}$. Finally, due to Lemma~\ref{lemma:metric}, because $f$ is $L$-Lipschitz, the embedding approximately preserves pairwise distances on the manifold:
		\[
		\frac{1}{L} d_\mathcal{M}(\mathbf{x}_i, \mathbf{x}_j) \le \|f(\mathbf{x}_i) - f(\mathbf{x}_j)\|_2 \le L \, d_\mathcal{M}(\mathbf{x}_i, \mathbf{x}_j),
		\]
		providing a bounded distortion guarantee.\\
	The combination of these results guarantees that SGD finds a locally optimal parameter configuration that produces a semantically structured embedding space suitable for similarity tasks.
\end{proof}

\begin{remark}
		The theoretical guarantees depend on key parameters:
		\begin{enumerate}[label=(\roman*)]
			\item The margin $\gamma$ (from Lemma \ref{lemma:structure}) directly controls inter-class separation,
			\item The Lipschitz constant $L$ (from Lemma \ref{lemma:structure} and the metric preservation lemma) controls how well local neighborhoods are preserved and bounds the distortion of pairwise distances on the data manifold,
			\item The learning rate conditions (from Lemma \ref{lemma:convergence}) ensure optimization convergence.
		\end{enumerate}
In practice, the regularization terms $\mathcal{L}_{\text{div}}$ and $\mathcal{L}_{\text{uniform}}$ promote well-spread embeddings (supporting large $\gamma$), while $\mathcal{L}_{\text{local}}$ and $\mathcal{L}_{\text{global}}$ enforce neighborhood preservation and approximate isometry (related to the Lipschitz property).
	\end{remark}
\begin{assumption}[Manifold Hypothesis]
	The data is assumed to lie on a smooth, low-dimensional Riemannian manifold $\mathcal{M} \subset \mathbb{R}^D$. 
	The embedding function $f: \mathcal{M} \to \mathbb{R}^d$ is approximately isometric, meaning that Euclidean distances in the embedding space preserve the intrinsic (geodesic) distances on the manifold:
	\begin{equation}
		d_{\mathcal{M}}(\mathbf{x}_i, \mathbf{x}_j) \;\approx\; \|f(\mathbf{x}_i) - f(\mathbf{x}_j)\|_2,
	\end{equation}
	for all $\mathbf{x}_i, \mathbf{x}_j \in \mathcal{M}$.
\end{assumption}
This framework is particularly well-suited for applications like medical imaging, where semantic similarity and clinical relevance are more critical than pixel-perfect reconstruction accuracy, opening new possibilities for tensor-based analysis of complex spatio-temporal data.

\section{Metrics and Methodology Analysis}\label{sec:metric}

\subsection{Evaluation Metrics}\label{subsec:em}
We assess the quality of the learned embeddings using a comprehensive suite of metrics that evaluate clustering quality, structural preservation, and alignment with ground-truth labels.

{Clustering Quality Metrics} evaluate the intrinsic structure of the embeddings without using label information:\\
\textit{Silhouette Score (Sil)} measures how similar samples are to their own cluster compared to other clusters. For a sample $i$, it is computed as:
	\begin{equation}
		s(i) = \frac{b(i) - a(i)}{\max\{a(i), b(i)\}},
	\end{equation}
	where $a(i)$ is the mean intra-cluster distance and $b(i)$ is the mean nearest-cluster distance. Scores range from $-1$ to $+1$, with higher values indicating better clustering.
	
{Davies-Bouldin Index}(DB) quantifies the trade-off between cluster compactness and separation. For $k$ clusters, it is defined as:
	\begin{equation}
		\text{DB} = \frac{1}{k} \sum_{i=1}^k \max_{j \neq i} \left( \frac{\sigma_i + \sigma_j}{d(c_i, c_j)} \right),
	\end{equation}
	where $\sigma_i$ is the average distance from points in cluster $i$ to its centroid $c_i$, and $d(c_i, c_j)$ is the distance between centroids. Lower values indicate better clustering.
	
 \textit{Calinski-Harabasz Index}(CH) is defined as the ratio of between-cluster dispersion to within-cluster dispersion:
	\begin{equation}
		\text{CH} = \frac{\text{Tr}(B_k)}{\text{Tr}(W_k)} \times \frac{N - k}{k - 1},
	\end{equation}
	where $\text{Tr}(B_k)$ and $\text{Tr}(W_k)$ are the traces of the between-cluster and within-cluster dispersion matrices, respectively. Higher values indicate tighter and better-separated clusters.

{External Validation Metrics} measure the agreement between the discovered clusters and the ground-truth labels:
\textit{Adjusted Rand Index (ARI)} measures the similarity between two clusterings, corrected for chance. It is defined as:
	\begin{equation}
		\text{ARI} = \frac{\sum_{ij} \binom{n_{ij}}{2} - \left[\sum_i \binom{a_i}{2} \sum_j \binom{b_j}{2}\right] / \binom{n}{2}}{\frac{1}{2} \left[\sum_i \binom{a_i}{2} + \sum_j \binom{b_j}{2}\right] - \left[\sum_i \binom{a_i}{2} \sum_j \binom{b_j}{2}\right] / \binom{n}{2}},
	\end{equation}
	where $n_{ij}$ is the contingency table between true and predicted clusters, $a_i$ and $b_j$ are the row and column sums. ARI ranges from $-1$ to $1$, with higher values indicating better agreement.
	
 \textit{Normalized Mutual Information (NMI)} measures the mutual information between true and predicted clusters, normalized by the entropy of each:
	\begin{equation}
		\text{NMI} = \frac{2 \cdot I(Y; \hat{Y})}{H(Y) + H(\hat{Y})},
	\end{equation}
	where $I(Y; \hat{Y})$ is the mutual information and $H(\cdot)$ is the entropy. NMI ranges from $0$ to $1$, with higher values indicating better cluster alignment.

{Metric Learning Specific Metrics} directly evaluate the effectiveness of the triplet loss objective:\\
\textit{Separation Ratio}(SR) quantifies the ratio of inter-class to intra-class distances:
	\begin{equation}
		\text{Separation Ratio} = \frac{\mathbb{E}[\|f(\mathbf{x}_i) - f(\mathbf{x}_j)\|_2 \mid y_i \neq y_j]}{\mathbb{E}[\|f(\mathbf{x}_i) - f(\mathbf{x}_j)\|_2 \mid y_i = y_j]}.
	\end{equation}
Since the Separation Ratio is scale-invariant, its absolute magnitude depends on the relative contraction of intra-class distances and expansion of inter-class distances rather than the embedding norm; therefore, comparisons are meaningful across methods trained under identical preprocessing and normalization.
	A higher ratio indicates that the embedding space successfully pulls same-class samples together and pushes different-class samples apart.

{Structural Preservation Metrics} evaluate how well local geometric structure is preserved during dimensionality reduction:\\
 \textit{Trustworthiness} (Trust.) measures the preservation of local structure by penalizing for false neighbors (points that are neighbors in the embedding but not in the original space):
	\begin{equation}
		T(k) = 1 - \frac{2}{nk(2n - 3k - 1)} \sum_{i=1}^n \sum_{j \in \mathcal{U}_k(i)} (r_O(i,j) - k),
	\end{equation}
	where $\mathcal{U}_k(i)$ are the intruders for point $i$ and $r_O(i,j)$ is the rank of $j$ in the original space.
	
 \textit{Continuity}(Cont.) is the complementary measure to trustworthiness, which penalizes for missing neighbors (points that are neighbors in the original space but not in the embedding):
	\begin{equation}
		C(k) = 1 - \frac{2}{nk(2n - 3k - 1)} \sum_{i=1}^n \sum_{j \in \mathcal{V}_k(i)} (r_E(i,j) - k),
	\end{equation}
	where $\mathcal{V}_k(i)$ are the missing neighbors for point $i$ and $r_E(i,j)$ is the rank of $j$ in the embedding space.

These metrics provide complementary perspectives: Silhouette, DB, and CH indices focus on intrinsic cluster quality; ARI and NMI validate clustering against ground truth; the Separation Ratio directly measures metric learning effectiveness; and Trustworthiness and Continuity evaluate the preservation of local geometric structure.

\subsection{Visualization of High-Dimensional Embeddings}
To gain qualitative insights into the structure of the learned embeddings, we project them into two dimensions using both linear and non-linear techniques. The quality of these visualizations is inherently dependent on the distance metric learned by the proposed model.

We employ three standard dimensionality reduction methods Principal Component Analysis (PCA) \cite{doi:10.1080/14786440109462720} as a linear baseline that projects data onto the directions of maximal variance. t-Distributed Stochastic Neighbor Embedding (t-SNE) \cite{maaten2008visualizing} as a non-linear technique that preserves local similarities by minimizing the Kullback-Leibler divergence \cite{kullback1951information} between probability distributions in the high- and low-dimensional spaces:
\begin{equation}
 \text{KL}(P \| Q) = \sum_{i \neq j} p_{ij} \log \frac{p_{ij}}{q_{ij}},    
\end{equation}
where
        \begin{align}
         p_{j|i} &= \frac{\exp(-\|\mathbf{x}_i - \mathbf{x}_j\|^2 / 2\sigma_i^2)}{\sum_{k \neq i} \exp(-\|\mathbf{x}_i - \mathbf{x}_k\|^2 / 2\sigma_i^2)}, \quad
            q_{ij} = \frac{(1 + \|\mathbf{y}_i - \mathbf{y}_j\|^2)^{-1}}{\sum_{k \neq l} (1 + \|\mathbf{y}_k - \mathbf{y}_l\|^2)^{-1}}.
        \end{align}
And Uniform Manifold Approximation and Projection (UMAP) \cite{mcinnes2018umap} as a manifold learning technique that assumes the data is uniformly distributed on a Riemannian manifold. It constructs a topological representation and optimizes a low-dimensional equivalent.

We project the raw, flattened input data $\mathbf{X}_{\text{raw}} \in \mathbb{R}^{N \times D}$ to 2D using PCA to establish a baseline: $\mathbf{X}_{\text{raw}} \rightarrow \mathbf{Y}_{\text{PCA-raw}} \in \mathbb{R}^{N \times 2}$.
And to represent metric learning embedding space, we project the learned embeddings $\mathbf{Z} \in \mathbb{R}^{N \times d}$ using all three techniques $ \mathbf{Z} \rightarrow   \mathbf{Y}_{\alpha} \in \mathbb{R}^{N \times 2}$, where $\alpha = {\text{PCA}}, {\text{t-SNE}}, {\text{UMAP}} $.
Well-separated, tight clusters in these projections indicate a semantically coherent embedding space. To quantitatively validate these qualitative observations, we complement the visualizations with histograms of intra-class and inter-class distances:
\begin{equation}
D_{\text{intra}} = \{ \|\mathbf{z}_i - \mathbf{z}_j\|_2 : y_i = y_j \}, \quad D_{\text{inter}} = \{ \|\mathbf{z}_i - \mathbf{z}_j\|_2 : y_i \neq y_j \}.
\end{equation}
A clear separation between the $D_{\text{intra}}$ and $D_{\text{inter}}$ distributions confirms the patterns observed in the 2D projections.

\subsection{Triplet Mining Strategy Analysis}
The strategy for selecting triplets from the training data is critical for efficient and stable model convergence. We analyze two common strategies \cites{schroff2015facenet,wang2019multi,hermans2017defense}.

The \textit{Semi-Hard Negative Mining} strategy selects negatives that are farther from the anchor than the positive, but within the margin $\alpha$. This provides a steady, moderate learning signal.
\begin{algorithm}[H]
\caption{Semi-Hard Negative Mining}
\begin{algorithmic}[1]
\REQUIRE Embeddings $\mathbf{Z}$, labels $\mathbf{y}$, margin $\alpha$
\FOR{each anchor $\mathbf{z}_a$ with label $y_a$}
    \STATE Find a random positive $\mathbf{z}_p$ where $y_p = y_a$
    \STATE Compute positive distance $d_p = \|\mathbf{z}_a - \mathbf{z}_p\|_2$
    \STATE Find the set of negatives $\mathcal{N} = \{\mathbf{z}_n \mid y_n \neq y_a, \ d_p < \|\mathbf{z}_a - \mathbf{z}_n\|_2 < d_p + \alpha \}$
    \STATE Select a random negative from $\mathcal{N}$ (if non-empty)
\ENDFOR
\end{algorithmic}
\end{algorithm}
The \textit{Hard Negative Mining} is a more aggressive strategy selects the most challenging triplets by choosing the most distant positive and the closest negative for each anchor. This can lead to faster learning but also risks instability if the hard negatives are outliers or mislabeled.
\begin{algorithm}[H]
\caption{Hard Negative Mining}
\begin{algorithmic}[1]
\REQUIRE Embeddings $\mathbf{Z}$, labels $\mathbf{y}$
\FOR{each anchor $\mathbf{z}_a$ with label $y_a$}
    \STATE Find hardest positive: $\mathbf{z}_p^* = \arg\max_{\mathbf{z}_p: y_p = y_a} \|\mathbf{z}_a - \mathbf{z}_p\|_2$
    \STATE Find hardest negative: $\mathbf{z}_n^* = \arg\min_{\mathbf{z}_n: y_n \neq y_a} \|\mathbf{z}_a - \mathbf{z}_n\|_2$
    \STATE Use triplet $(\mathbf{z}_a, \mathbf{z}_p^*, \mathbf{z}_n^*)$
\ENDFOR
\end{algorithmic}
\end{algorithm}
The comparative performance of these strategies is evaluated and presented in the following section.

\section{Results}\label{sec:res}
	To comprehensively evaluate the proposed no-rank metric learning framework, we compare its performance against representative methods from three major categories: (i) classical linear and manifold-based embeddings, (ii) tensor decomposition–based approaches with explicit rank selection, and (iii) deep learning–based representation and clustering models.\\
	\paragraph{(i) Linear and Manifold-Based Baselines:}
	We include widely used dimensionality reduction techniques followed by k-means clustering, namely PCA, t-SNE, and UMAP. These methods serve as standard baselines for assessing clustering quality and neighborhood preservation in low-dimensional embeddings, but do not explicitly model tensor structure or semantic similarity.\\
	\paragraph{(ii) Tensor Decomposition Methods:}
	To benchmark against classical tensor-based representations, we consider CP, Tucker, and t-SVD decompositions with varying ranks ($R = 5, 10, 20$). For each method, the input tensor is decomposed into low-rank factors, vectorized representations are extracted, and k-means clustering is applied. These approaches require explicit rank selection and primarily optimize reconstruction fidelity rather than discriminative or semantic objectives.\\
	\paragraph{(iii) Deep Learning Baselines:}
	We further compare against representative reconstruction-based deep clustering models, including Variational Autoencoders (VAE) \cite{kingma2013auto} and Deep Embedded Clustering (DEC) \cite{xie2016unsupervised}. These methods learn latent representations by optimizing reconstruction-driven objectives and perform clustering in the learned latent space, without explicitly enforcing metric or neighborhood constraints.\\
	\paragraph{Transformer-Based Models:}
	Transformer architectures are evaluated separately in Subsection~\ref{sec:comparison}, as they introduce fundamentally different inductive biases, optimization dynamics, and computational costs compared to both tensor decompositions and autoencoder-based models. A dedicated analysis allows for a more focused comparison with the proposed metric learning framework.\\
	In contrast to all baselines, the proposed method directly learns an embedding space optimized for semantic similarity using a triplet-based metric learning objective augmented with diversity, uniformity, and locality-preserving regularization. Unlike tensor decompositions, it does not impose explicit rank constraints, and unlike reconstruction-based deep models, it does not aim to reconstruct the original tensor. Instead, the embedding dimensionality acts as an implicit, optimization-driven notion of rank.\\
	All methods are evaluated using identical preprocessing, consistent train--test splits, and matched embedding dimensionality where applicable. Clustering is performed using k-means for all baselines to ensure comparability. Performance is assessed using clustering compactness (Silhouette, Davies--Bouldin), separation (Separation Ratio), neighborhood preservation (Continuity, Trustworthiness), and clustering agreement metrics (ARI, NMI). Dataset-specific results are reported in the following subsections.

\subsection{Metric Learning Performance in Face Recognition}

We evaluate this framework on face recognition, a canonical metric learning task. Here, the goal is to learn an embedding where the distance between an anchor image and a positive example (same person) is smaller than the distance to a negative example (different person): \(d(\text{anchor}, \text{positive}) < d(\text{anchor}, \text{negative})\).

We used two contrasting datasets (Table~\ref{tab:stat}): the {Labeled Faces in the Wild (LFW)} \cite{huang2008labeled} dataset, a medium-sized, imbalanced dataset representing a real-world challenge; and the {Olivetti Faces} \cite{pedregosa2011scikit} dataset, a smaller, balanced dataset captured in a controlled environment.

\begin{table}[h!]
	\centering
	\caption{Summary of Face Recognition Dataset Properties}
	\begin{tabular}{lcc}
		\toprule
		\textbf{Property} & \textbf{LFW Faces} & \textbf{Olivetti Faces} \\
		\midrule
		Total Images & 1,288 & 400 \\
		Identities & 7 & 40 \\
		Image Dimensions & $50 \times 37$ & $64 \times 64$ \\
		Samples per Person & 71--530 & 10 \\
		Class Distribution & Highly Imbalanced & Perfectly Balanced \\
		Environment & Unconstrained & Controlled \\
		\bottomrule
	\end{tabular}
	\label{tab:stat}
\end{table}

\subsubsection*{Quantitative Clustering Performance}

The clustering results (Table~\ref{tab:face_clustering_results}) demonstrate the decisive advantage of the metric learning approach over baselines like PCA and tensor decomposition methods for creating semantically meaningful clusters.

\begin{table}[htbp]
	\centering
	\caption{Clustering Performance on LFW and Olivetti Datasets}
		\resizebox{\linewidth}{!}{%
	\begin{tabular}{llccccccc}
		\toprule
		\textbf{Dataset} & \textbf{Method} & \textbf{Sil.} & \textbf{DB} & \textbf{SR} & \textbf{Cont.} & \textbf{Trust.} &\textbf{AIR}&\textbf{NMI}\\
		\midrule
		\multirow{11}{*}{\textbf{LFW}}
		& PCA + K-Means           & -0.0186 & 7.3302 & 1.0131 & \textbf{0.9967} & \textbf{0.9933} &0.0181&0.0324\\
		& t-SNE + K-Means         & -0.0922 & 212.4068 & 1.0005 & 0.9206 & 0.9487&0.0128&0.0347 \\
		& UMAP + K-Means          & -0.0815 & 41.6583 & 1.0013 & 0.9198 & 0.8721 &0.0049&0.0195\\
		& CP-R5                   & -0.1216 & 18.5653 & 1.0033 & 0.8186 & 0.8715&0.0066&0.0242 \\
		& CP-R10                  & -0.0461 & 13.2544 & 1.0234 & 0.8824 & 0.9072&0.0061&0.0272 \\
		& CP-R20                  & -0.0906 & 10.3420 & 0.9720 & 0.9090 & 0.9274&0.0061&0.3050 \\
		& Tucker-R5               & -0.0689 & 10.8919 & 1.0181 & 0.9504 & 0.9033 &0.0222&0.0479\\
		& Tucker-R10              & -0.0338 & 7.9771  & 1.0254 & 0.9837 & 0.9666& 0.0187&0.0450 \\
		& Tucker-R20              & -0.0037 & 5.8455  & 1.0280 & 0.9886 & 0.9789&0.0078&0.0296 \\
		& \revised{t-SVD - R5}            & \revised{-0.0690} &\revised{10.8911}   & \revised{1.0181} &\revised{0.9505}  & \revised{0.9335}&\revised{0.0233}& \revised{0.0477} \\
		& \revised{t-SVD - R10}            & \revised{-0.0338} &\revised{7.9748}   & \revised{1.0255} &\revised{0.9837}  & \revised{0.9667}&\revised{0.0196}&  \revised{0.0468}\\
		& \revised{t-SVD - R20}            & \revised{-0.0037} &\revised{5.8454}   & \revised{1.0280} &\revised{0.9886}  & \revised{0.9790}&\revised{0.0072}& \revised{0.0271} \\
		& \revised{VAE}           & \revised{0.0122} &\revised{5.2531}   & \revised{1.0508} &\revised{0.9879}  & \revised{0.9819}&\revised{0.0257} &\revised{0.0424}\\
		& \revised{DEC}           & \revised{-0.0468} &\revised{7.3699}   & \revised{1.0046} &\revised{0.8856}  & \revised{0.8436}&\revised{0.0167} &\revised{0.0273}\\
		& \textbf{Metric Learning}     & \textbf{0.9752} & \textbf{0.0566} & \textbf{49.1800} & 0.9236 & 0.9201 &\textbf{1.0000}&\textbf{1.0000}\\
		\midrule
		\multirow{11}{*}{\textbf{Olivetti}}
		& PCA + K-Means           & 0.1434  & 1.8243  & 1.6002 & \textbf{0.9982} & \textbf{0.9970}&0.3831&0.7275 \\
		& t-SNE + K-Means         & -0.0123 & 9.2275  & 2.3923 & 0.9449 & 0.9730&0.4737&0.7898 \\
		& UMAP + K-Means          & -0.1213 & 8.4268  & 2.1196 & 0.9399 & 0.9352&0.3072& 0.6700 \\
		& CP-R5                   & -0.1432 & 3.7012  & 1.6951 & 0.9163 & 0.8920&0.1115&0.5403 \\
		& CP-R10                  & -0.0514 & 2.9316  & 1.7121 & 0.9386 & 0.9384 &0.2250&0.6299\\
		& CP-R20                  & 0.0021  & 2.5596  & 1.5354 & 0.9624 & 0.9536&0.2397&0.6411 \\
		& Tucker-R5               & -0.0275 & 2.9707  & 1.7517 & 0.9664 & 0.9358&0.2374&0.6554 \\
		& Tucker-R10              & 0.1303  & 1.9508  & 1.6772 & 0.9891 & 0.9800&0.4185& 0.7542 \\
		& Tucker-R20              & 0.1845  & 1.6731  & 1.5764 & 0.9935 & 0.9864&0.41976&0.7956 \\
		& \revised{t-SVD - R5}  & \revised{-0.0275} &\revised{2.9707}   & \revised{1.7517} &\revised{0.9664}  & \revised{0.9358}&\revised{0.2374}& \revised{0.6550}\\
		& \revised{t-SVD - R10}            & \revised{0.1303} &\revised{1.9508}   & \revised{1.6772} &\revised{0.9891}  & \revised{0.9800}&\revised{0.4185}&\revised{0.7540} \\
		& \revised{t-SVD - R20}           & \revised{0.1845} &\revised{1.6731}   & \revised{1.5764} &\revised{0.9935}  & \revised{0.9864}&\revised{0.4976} &\revised{0.7950}\\
		& \revised{VAE}           & \revised{0.1724} &\revised{1.7179}   & \revised{1.6540} &\revised{0.9892}  & \revised{0.9855}&\revised{0.4859} &\revised{0.7906}\\
		& \revised{DEC}           & \revised{-0.0534} &\revised{3.1705}   & \revised{1.3689} &\revised{0.9163}  & \revised{0.8428}&\revised{0.1627} &\revised{0.5899}\\
		& \textbf{Metric Learning}     & \textbf{0.8566} & \textbf{0.2341} & \textbf{9.8471} & 0.9728 & 0.9827&\textbf{0.9580}&\textbf{0.9864} \\
		\bottomrule
	\end{tabular}}
	\label{tab:face_clustering_results}
\end{table}

On the challenging LFW dataset, proposed approach achieved a near-perfect Silhouette score of {0.9752}, a dramatic improvement over PCA ({-0.0186}). This is corroborated by the Davies-Bouldin index, which dropped from {7.33} to {0.0566}, and the Separation Ratio, which increased from {1.01} to {49.18}, indicating that inter-class distances became vastly larger than intra-class distances.

The same trend is clear on the Olivetti dataset, where metric learning outperformed all other methods, achieving a Silhouette score of {0.8566}, a Davies-Bouldin index of {0.2341}, and a Separation Ratio of {9.8471}.

\revisedd{The clustering performance of the proposed metric learning approach is a direct consequence of how the embedding is learned. 
	Unlike classical dimensionality reduction and tensor decomposition methods, which aim to preserve variance or minimize reconstruction error, the proposed metric learning model explicitly optimizes the embedding geometry for semantic separability through the metric learning objective. \\
	Specifically, the triplet loss enforces that samples of the same identity are embedded close together while pushing different identities far apart, producing a representation that is intrinsically cluster-friendly. 
	In contrast, methods such as PCA, CP, Tucker, and t-SVD focus on low-rank approximation of the input data, and nonlinear techniques such as t-SNE and UMAP emphasize neighborhood preservation rather than global class separation. Consequently, these baselines yield embeddings that may preserve structure but are not optimized for identity clustering.\\
	Furthermore, the proposed diversity regularization promotes decorrelation among embedding dimensions, resulting in a compact, well-conditioned representation with an effective rank matched to the number of discriminative factors. 
	This combination of discriminative training and structured regularization produces clusters that are more compact and better separated, which is reflected across all evaluation metrics in Table~\ref{tab:face_clustering_results}.
}

\subsubsection*{The Clustering-Structure Preservation Trade-off}

A key finding is the trade-off between cluster quality and local structure preservation. While PCA achieves near-perfect Continuity and Trustworthiness, it fails to form meaningful clusters for face identity. This is because PCA preserves the \textit{original pixel-level geometry}, which does not align with \textit{semantic identity}.

In contrast, metric learning deliberately distorts the original geometry to create a new, semantically-organized space. The lower Continuity and Trustworthiness scores are a direct consequence of this transformation: neighbors in the pixel space (e.g., similar lighting) are pulled apart if they depict different people, while images of the same person are brought together despite pixel-level differences. This trade-off is not a failure but the intended behavior, prioritizing task-relevant semantic separation over raw structural fidelity.

\subsubsection*{Qualitative and Visual Analysis}

Visualizations and retrieval examples qualitatively validate the quantitative results. Figure~\ref{fig:vis_comparison} shows t-SNE projections of the embeddings. Unlike the original data where classes overlap, the metric-learned embeddings form tight, well-separated clusters for each identity.

\begin{figure}
	\centering
	\includegraphics[width=0.24\linewidth]{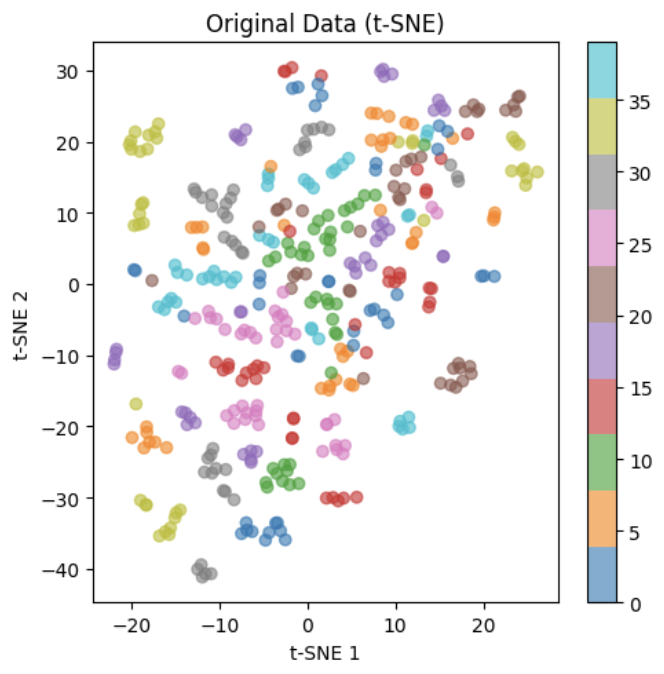}
	\includegraphics[width=0.235\linewidth]{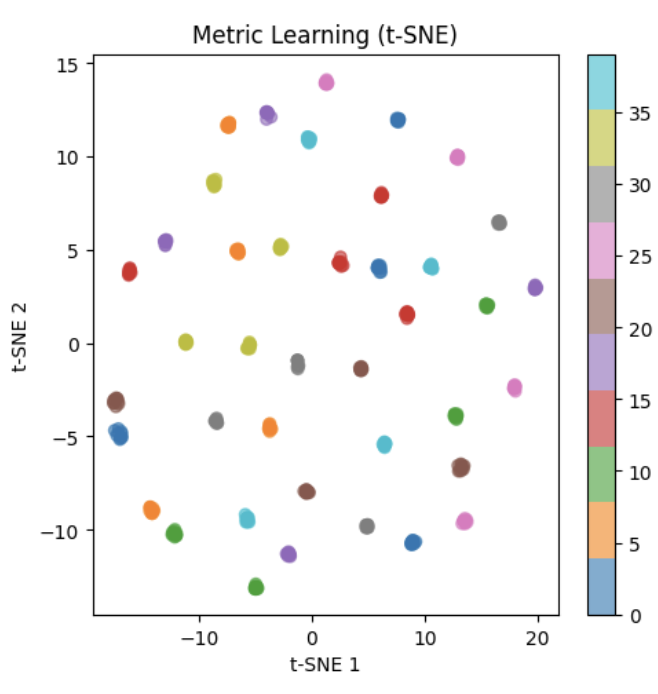}
	\includegraphics[width=0.245\linewidth]{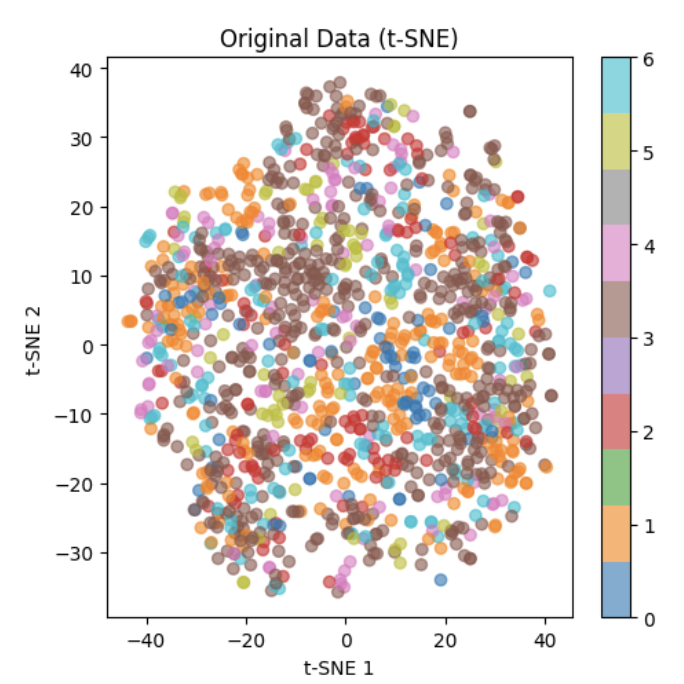}
	\includegraphics[width=0.235\linewidth]{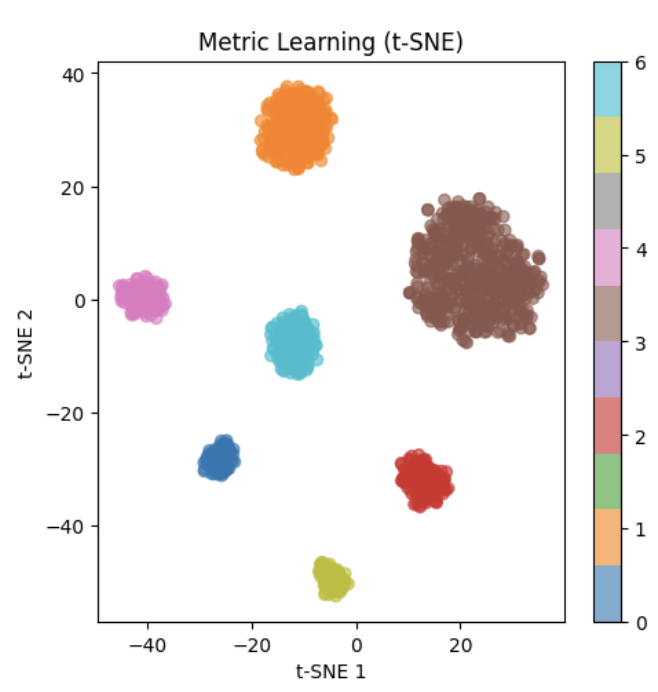}
	\caption{\revised{\textbf{LFW \& Olivetti Faces}}: t-SNE visualization of face embeddings. {Left to right:} Olivetti (original data), Olivetti (metric learning), LFW (original data), LFW (metric learning). Metric learning produces distinct, identity-based clusters.}
	\label{fig:vis_comparison}
\end{figure}

The practical impact of this clustering is evident in nearest-neighbor retrieval. In Figures~\ref{fig:clus} and \ref{fig:clus-o}, the nearest neighbors of a query image in the metric learning space are consistently of the same person, despite variations in pose and lighting. The small distances between anchor and positive pairs confirm the model successfully compacts same-identity samples.

\begin{figure}
	\centering
	\includegraphics[width=\linewidth]{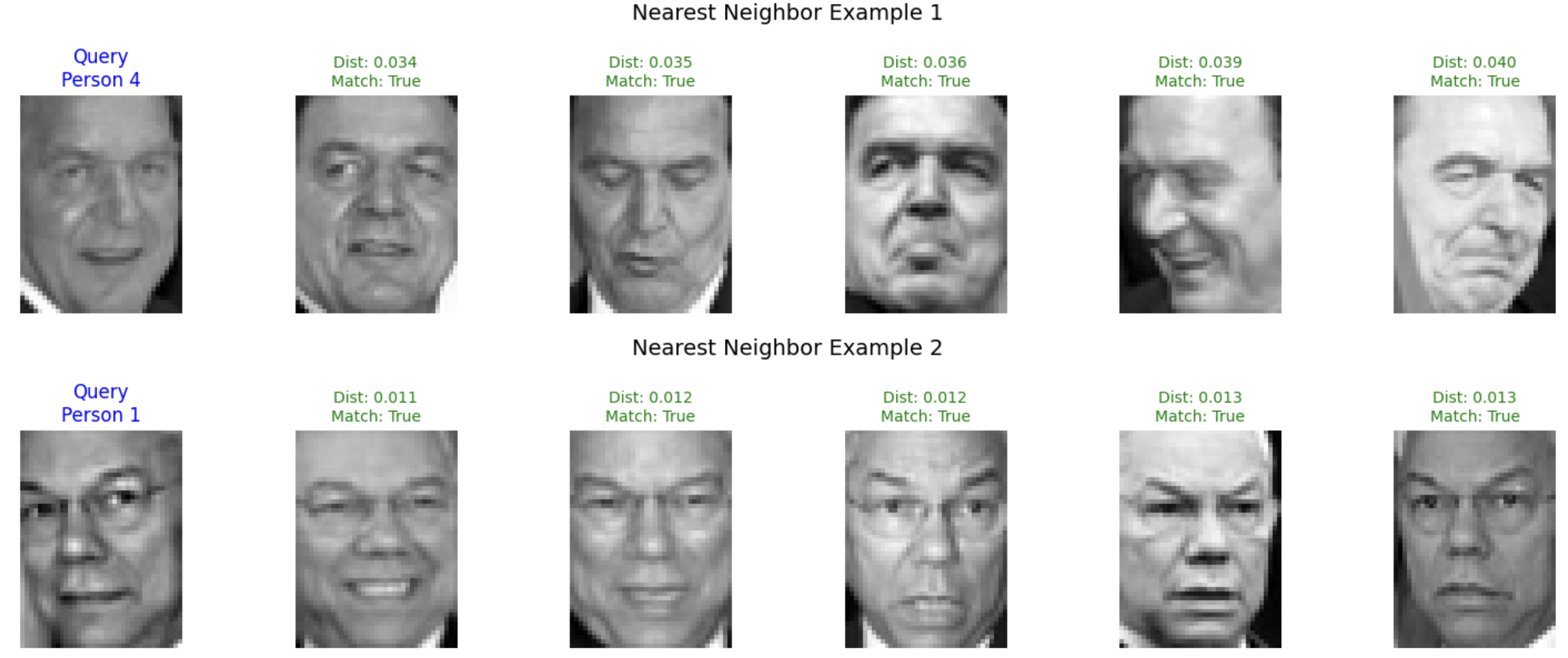}
	\caption{\revised{\textbf{LFW Faces}}: Nearest-neighbor retrieval on the LFW dataset using the metric learning embedding. The model correctly identifies same-identity images as the closest neighbors, with small corresponding Euclidean distances.}
	\label{fig:clus}
\end{figure}

\begin{figure}
	\centering
	\includegraphics[width=\linewidth]{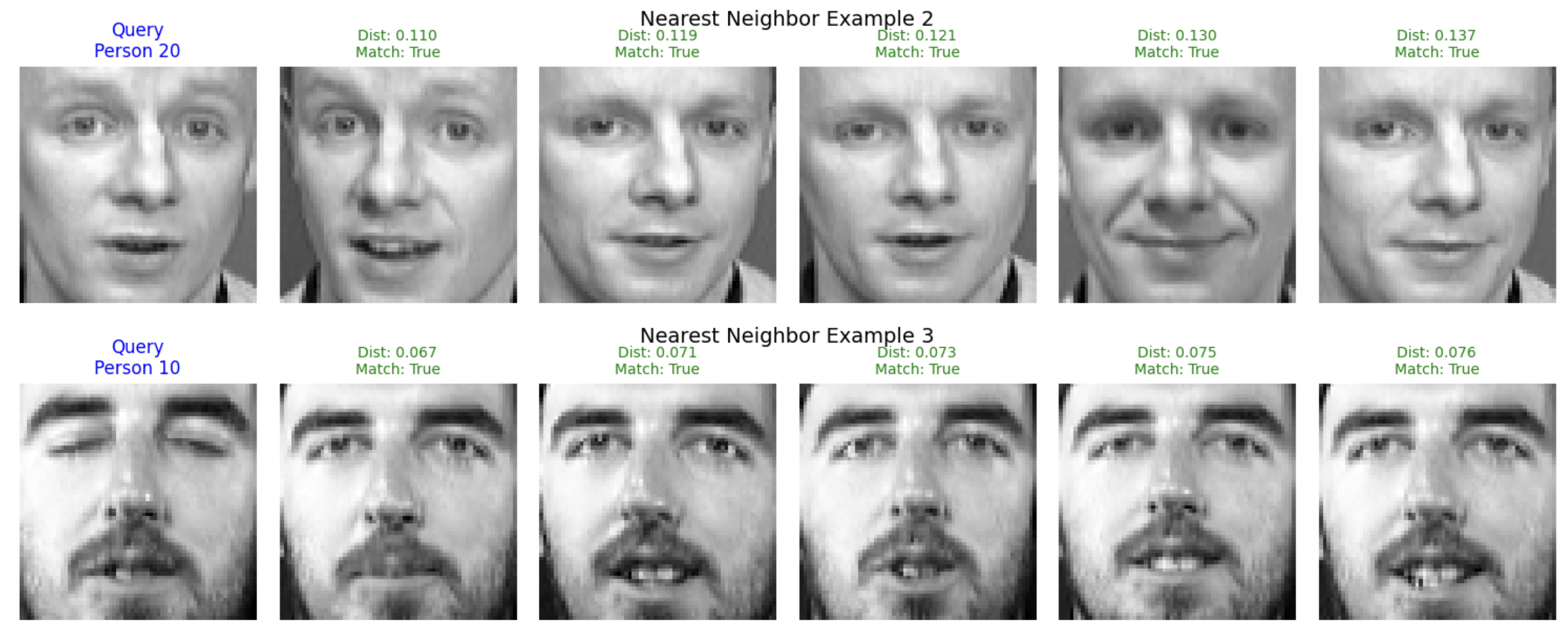}
	\caption{\revised{\textbf{Olivetti Faces}}: Nearest-neighbor retrieval on the Olivetti Faces dataset using the metric learning embedding. The results demonstrate robust, identity-based retrieval.}
	\label{fig:clus-o}
\end{figure}
\newpage 

The rank sensitivity analysis reveals a fundamental limitation of fixed-rank tensor decompositions for semantic clustering tasks. On the challenging LFW dataset,  CP, \revised{Tucker and t-SVD} decompositions fail to achieve meaningful clustering across all ranks, with Silhouette scores remaining near zero or negative. This indicates that the low-rank constraints destroy the semantic structure necessary for identity separation. While Tucker \revised{and t-SVD decompositions show} a slight improvement with higher ranks (from -0.0689 at R5 to -0.0037 at R20 \revised{for Tucker and from -0.0690 at R5 to -0.0037 at R20 for t-SVD}), the gains are minimal. The Olivetti dataset, with its controlled conditions, shows more rank sensitivity, particularly for Tucker \revised{and t-SVD decompositions} where performances improve substantially with higher ranks (\revised{Tucker}: 0.1303 at R10 to 0.1845 at R20 \revised{similat to t-SVD)}. However, even the best fixed-rank result (Tucker-R20\revised{/t-SVD}: 0.1845) is \revisedd{outperformed by the metric learning approach (0.8566), demonstrating that rank constraints inherently limit the ability to capture semantically meaningful representations, regardless of rank selection.}

\subsection{Metric Learning Performance on Brain Connectivity Data}
\revised{Unlike unsupervised baselines, the metric learning model leverages diagnostic labels during training to shape the embedding geometry.}
To evaluate metric learning framework on a complex, high-dimensional biomedical problem, we applied it to the Autism Brain Imaging Data Exchange (ABIDE) \cite{di2017enhancing} dataset. This dataset contains resting-state functional MRI (rs-fMRI) data from 871 subjects (403 with Autism Spectrum Disorder, ASD, and 468 typically developing controls). Each subject is represented by a $111 \times 111$ functional connectivity matrix, which are flattened and normalized to serve as input. The clinical labels (ASD vs. control) provide the semantic similarity relationships for guiding the metric learning process.

\begin{table}[H]
\centering
\caption{Clustering Performance on the ABIDE Brain Connectivity Dataset}
\resizebox{\linewidth}{!}{%
\begin{tabular}{lcccccccc}
\toprule
\textbf{Method} & \textbf{Sil.} & \textbf{DB} & \textbf{CH} & \textbf{SR} & \textbf{Cont.} & \textbf{Trust.}& \textbf{AIR}& \textbf{NMI} \\
\midrule
PCA + K-Means             & 0.2747 & 1.2721 & 53.5725   & 1.0017 & {0.9095} & {0.9078} & -0.0012   &0.0012\\
t-SNE + K-Means           & 0.5602 & 0.6062 & 228.7278  & 0.9919 & 0.9007 & 0.9010&-0.0011    &0.0000 \\
UMAP + K-Means            & 0.6194 & 0.5205 & 311.3453  & 0.9960 & 0.9007 & 0.8758& -0.0008   &0.0002 \\
CP-R5 &0.0022&17.8678&2.3617&1.0024&0.9144&0.8885&-0.0012&0.0006\\
CP-R10 &0.0025&12.2807&4.0502&1.0007&0.9100&0.9007&0.0005&0.0005\\
CP-R20 &0.0070&7.1090&13.8142&1.0035&0.8519&0.8600&0.0070&0.0170\\
Tucker-R5 &0.0061&9.1750&8.8522&1.0050&0.8969&0.8698&0.0016&0.0012\\
Tucker-R10 &0.0046&11.1730&6.4451&1.0040&0.9245&0.8825&0.0047&0.0033\\
Tucker-R20 &0.0053&11.7691&5.9515&1.0048&\textbf{0.9456}&0.8597&0.0014&0.0012\\
\revised{t-SVD- R5}              & \revised{0.0061} &\revised{9.1750} & \revised{8.8522}   & \revised{1.0050} & \revised{0.8969} & \revised{0.8698} & \revised{0.0016}& \revised{0.0012}\\
\revised{t-SVD- R10}              & \revised{0.0046} &\revised{11.1730} & \revised{6.4451}   & \revised{1.0040} & \revised{0.9245} & \revised{0.8825} & \revised{0.0047}& \revised{0.0033}\\
\revised{t-SVD- R20}              & \revised{0.0053} &\revised{11.7691} & \revised{5.9515}   & \revised{1.0048} & \revised{\textbf{0.9456}} & \revised{0.8597} & \revised{0.0014}& \revised{0.0012}\\
 \revised{VAE}           & \revised{0.0213} &\revised{5.9506}  & \revised{8.9761}& \revised{1.0226} &\revised{0.4978}  & \revised{0.5587}&\revised{0.0145} &\revised{0.0129}\\
 \revised{DEC}           & \revised{0.1166} &\revised{2.4894}& \revised{12.0923}  & \revised{1.1406} &\revised{0.5858}  & \revised{0.5890}&\revised{0.5072} &\revised{0.4091}\\
\textbf{Metric Learning}  & \textbf{0.9932} & \textbf{0.0186} & \textbf{31912.9395} & \textbf{1.9997} & {0.8389} &\textbf{0.9155}& \textbf{0.3002}   & \textbf{0.2372} \\
\bottomrule
\end{tabular}%
}
\label{tab:brain_clustering_results}
\end{table}

We have implemented a comprehensive data augmentation and normalization pipeline for brain connectivity matrices to address the dual challenges of limited neuroimaging datasets and inter-subject variability. The approach begins with patient-wise normalization to standardize individual connectivity profiles while preserving relative network topology. For each patient's correlation matrix $\mathbf{C}_i \in \mathbb{R}^{N \times N}$, we apply Z-score normalization: $\mathbf{C}_i' = \frac{\mathbf{C}_i - \mu_i}{\sigma_i}$, where $\mu_i$ and $\sigma_i$ are the mean and standard deviation computed exclusively from the $i$-th patient's connectivity matrix. This ensures each subject's data is centered and scaled independently while maintaining the intrinsic structure of their functional brain networks.

Following normalization, we employ multi-strategy data augmentation to enhance model robustness. For each normalized connectivity matrix $\mathbf{C}_i'$, we generate augmented variants through (i) Gaussian Noise Injection: $\mathbf{C}_i'' = \mathbf{C}_i' + \epsilon$ where $\epsilon \sim \mathcal{N}(0, \sigma^2\mathbf{I})$ with reduced noise variance $\sigma=0.02$ to accommodate the normalized data distribution, (ii) Symmetric Structure Preservation: $\mathbf{C}_i''' = \frac{1}{2}(\mathbf{C}_i'' + {\mathbf{C}_i''}^\top)$ to maintain mathematical consistency as a symmetric correlation matrix, (iii) Diagonal Identity Enforcement: $\mathbf{C}i'''{kk} = 1$ for $k=1,\ldots,N$ to preserve self-connectivity representation and (iv) Value Range Clipping: $\mathbf{C}_i^{\text{final}} = \text{clip}(\mathbf{C}_i''', -3, 3)$ using wider bounds appropriate for normalized data distributions.

During training, we further apply on-the-fly augmentation with probability $p=0.5$, introducing gentle noise perturbations ($\sigma=0.02$) to prevent overfitting while maintaining the normalized data characteristics.

This combined normalization-augmentation strategy effectively addresses both inter-subject variability through patient-wise standardization and dataset limitations through structural-preserving augmentation, enabling robust metric learning while respecting the neurobiological integrity of functional connectivity patterns. The normalization ensures comparability across subjects, while the augmentation introduces controlled variations that enhance model generalization without distorting the fundamental network topology.

\revised{The clustering results in Table~\ref{tab:brain_clustering_results} demonstrate that the proposed metric learning framework yields substantially more discriminative embeddings than all baseline methods. Owing to its supervised objective, the learned embedding space exhibits extremely tight intra-class compactness and strong inter-class separation, reflected by a very high Silhouette score (0.9932) and a low Davies–Bouldin index (0.0186). The large Calinski–Harabasz value further indicates strong between-group separation relative to within-group dispersion, which is expected when semantic labels explicitly guide the geometry of the embedding space.
	
	In contrast, unsupervised baselines—including UMAP, t-SNE, and all fixed-rank tensor decompositions—optimize geometric or reconstruction-based objectives without access to diagnostic labels. As a result, while some methods (e.g., UMAP) yield visually compact clusters, they fail to align with clinical structure, as evidenced by near-zero ARI and NMI scores. This discrepancy highlights a fundamental limitation of reconstruction-driven and rank-constrained tensor methods in clinical settings, where discriminative structure does not necessarily correspond to low-rank variance.
	
	Notably, CP, Tucker, and t-SVD decompositions exhibit strong sensitivity to rank selection and consistently produce negligible external validation scores across all tested ranks (Silhouette < 0.01, ARI/NMI < 0.02). By contrast, the proposed no-rank metric learning approach avoids explicit rank tuning and directly optimizes clinically meaningful similarity, resulting in substantially improved alignment with diagnostic labels (ARI = 0.3002, NMI = 0.2372). These results suggest that discriminative metric learning provides a more appropriate inductive bias for brain connectivity analysis than reconstruction-based tensor factorization.}

In summary, by directly optimizing for the clinically relevant separation between ASD and control subjects, metric learning uncovers a more discriminative and potentially more informative structure in brain connectivity data than methods focused solely on data reconstruction or geometric preservation.

\subsection{Metric Learning Performance on Simulated Datasets}

We evaluate the metric learning framework on two simulated datasets, each presenting distinct visual classification challenges relevant to their respective domains.

\textit{Galaxy Morphology Classification:} This dataset contains 500 ($64\times64$) pixel images simulating four galaxy morphological classes: Elliptical (smooth distribution), Spiral (prominent arms), Lenticular (disk without arms), and Irregular (asymmetric clumps). This task addresses fundamental challenges in astronomical image analysis.

\textit{Crystal Structure Prediction:} This dataset consists of 400 ($64\times64$) pixel images representing four crystal systems: Cubic (square symmetry), Hexagonal (six-fold symmetry), Tetragonal (rectangular symmetry), and Orthorhombic (anisotropic spacing). These patterns correspond to fundamental lattice structures in materials science.

Figure~\ref{fig:pat} shows representative samples from both datasets, illustrating the distinct visual characteristics of each class.

\begin{figure}
	\centering
	\includegraphics[width=\linewidth]{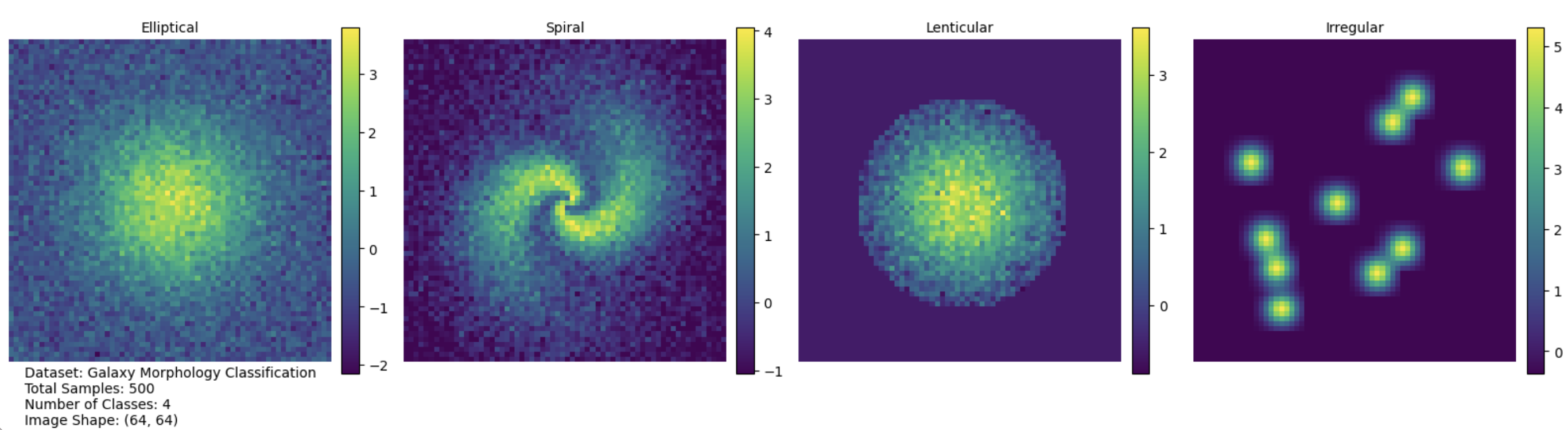}
	\includegraphics[width=\linewidth]{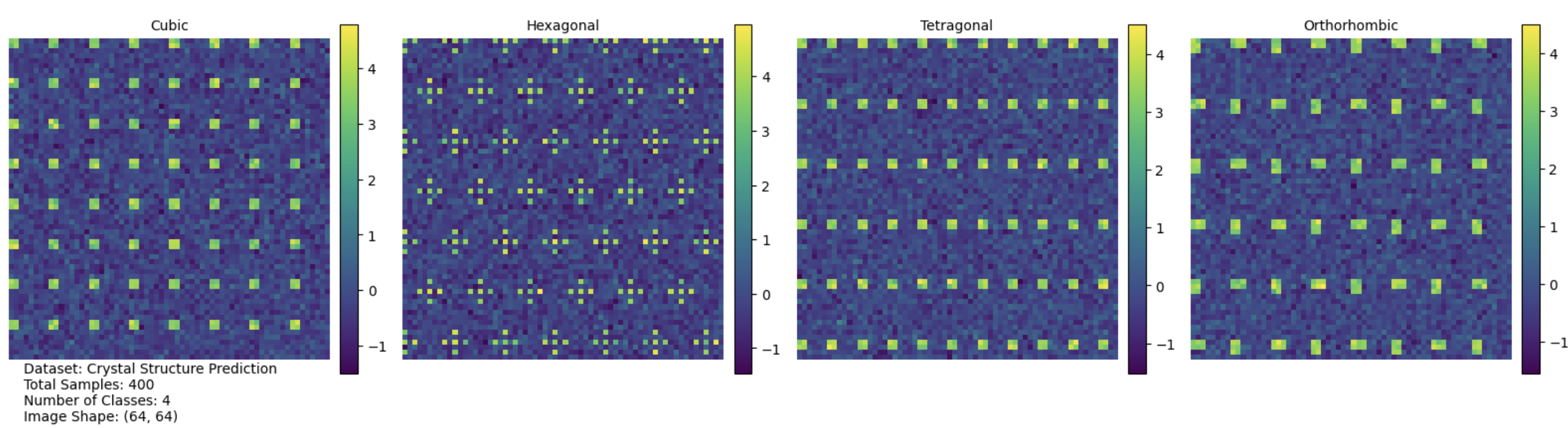}
	\caption{\revised{\textbf{Galaxy morphology \& Crystal structure:}}Example images from simulated datasets. {Top:} Galaxy morphology classes (Elliptical, Spiral, Lenticular, Irregular). {Bottom:} Crystal structure classes (Cubic, Hexagonal, Tetragonal, Orthorhombic).}
	\label{fig:pat}
\end{figure}

\subsubsection*{Quantitative Clustering Performance}

Table~\ref{tab:gal} presents clustering performance across multiple methods. The metric learning approach consistently achieves superior results on both datasets, substantially outperforming unsupervised dimensionality reduction techniques.

\begin{table}
	\centering
	\caption{Clustering Performance on Galaxy Morphology and Crystal Structure Datasets}
	\resizebox{\linewidth}{!}{%
	\begin{tabular}{llcccccccc}
		\toprule
		\textbf{Data} & \textbf{Method} & \textbf{Sil.} & \textbf{DB} & \textbf{CH} & \textbf{SR} & \textbf{Cont.} & \textbf{Trust.} & \textbf{AIR}& \textbf{NMI}\\
		\midrule
		\multirow{11}{*}{\textbf{Gal.}}
		& {\small PCA + K-Means  }          & 0.6572 & 2.8360 & $1.08 \times 10^{2}$   & 2.6090 & 0.8394 & 0.8073& 0.6245&0.7916  \\
		&{\small  t-SNE + K-Means  }        & 0.4382 & 0.9575 & $1.66 \times 10^{3}$   & 3.3263 & \textbf{0.8908} & 0.7830&0.5304 &0.6154  \\
		& {\small UMAP + K-Means}           & 0.4234 & 0.9435 & $8.97 \times 10^{2}$   & 2.4367 & 0.8502 & 0.7259 &0.5755 &0.6555\\
		& CP-R5                    & 0.6850 & 0.9077 & $1.93 \times 10^{3}$   & 5.3548 & 0.8004 & 0.7040 &0.5844 &0.7292\\
		& CP-R10                   & 0.6791 & 0.9805 & $6.72 \times 10^{2}$   & 3.4933 & 0.7961 & 0.7301&0.4916 & 0.6500\\
		& CP-R20                   & 0.5910 & 1.2845 & $2.00 \times 10^{2}$   & 2.4022 & 0.7671 & 0.7475&0.4637 &0.6237 \\
		& Tucker-R5                & 0.5051 & 1.0396 & $2.53 \times 10^{2}$   & 2.4988 & 0.8003 & 0.7216& 0.5577	&0.7042 \\
		& Tucker-R10               & 0.4940 & 2.0683 & $6.21 \times 10^{1}$   & 1.8776 & 0.7956 & 0.7674& 0.5699&0.7229 \\
		& Tucker-R20               & 0.2357 & 2.2079 & $1.85 \times 10^{1}$   & 1.6687 & 0.7704 & 0.8168 & 0.4036&0.5620\\
		&\revised{t-SVD- R5}              & \revised{0.7007} &\revised{1.8470} & \revised{3.01$\times 10^{2}$}   & \revised{3.2555} & \revised{0.8273} & \revised{0.7704} & \revised{0.4989}&\revised{0.6752}\\
		&\revised{t-SVD- R10}              & \revised{0.6995} &\revised{1.9915} & \revised{3.88$\times 10^{2}$}   & \revised{3.2914} & \revised{0.8258} & \revised{0.7681} & \revised{0.5134}&\revised{0.7264}\\
		&\revised{t-SVD- R20}              & \revised{0.3134} &\revised{2.0187} & \revised{1.92$\times 10^{1}$}   & \revised{2.5133} & \revised{0.7593} & \revised{0.7992} & \revised{0.4419}&\revised{0.6182}\\
		&\revised{VAE}           & \revised{0.5816} &\revised{1.7615}  &\revised{ 1.34$ \times 10^{3}$} & \revised{2.0874} &\revised{0.7866}  & \revised{0.7853}&\revised{0.5949} &\revised{0.7443}\\
		&\revised{DEC}           & \revised{0.4490} &\revised{1.4421}  & \revised{ 6.68$ \times 10^{3}$} & \revised{1.9701} &\revised{0.6951}  & \revised{0.7208}&\revised{0.5384} &\revised{0.7106}\\
		& \textbf{Metric Learning} & \textbf{0.9999} & \textbf{0.0001} & $\mathbf{4.01 \times 10^{9}}$ & \textbf{9275.82} & 0.8031 & \textbf{0.8208}&\textbf{1.0000} &\textbf{0.9999} \\
		\midrule
		\multirow{11}{*}{\textbf{Cry.}}
		& {\small PCA + K-Means }           & 0.8843 & 0.1708 & $7.43 \times 10^{3}$   & 9.0601 & 0.9350 & 0.9315&0.9711 &0.9871 \\
		&{\small  t-SNE + K-Means    }      & 0.9193& 0.1031 & $2.15 \times 10^{4}$   & 16.0362 & \textbf{0.9521} & \textbf{0.9303} & 0.9832&0.9898\\
		& {\small UMAP + K-Means      }     & 0.9531 & 0.0651 & $6.86 \times 10^{4}$   & 28.3698 & 0.9333 & 0.9138 &0.9264 &0.9289\\
		& CP-R5                    & 0.9751 & 0.0355 & $3.55 \times 10^{5}$   & 61.8594 & 0.8863 & 0.8877 &0.9007 &0.9100\\
		& CP-R10                   & 0.9820 & 0.0254 & $3.72 \times 10^{5}$   & 62.9187 & 0.8904 & 0.8915&0.9015 & 0.9091\\
		& CP-R20                   & 0.9572 & 0.0597 & $7.48 \times 10^{4}$   & 27.7025 & 0.8912 & 0.8862 &0.9018 &0.9112\\
		& Tucker-R5                & 0.6455 & 0.6524 & $3.96 \times 10^{2}$   & 2.8324 & 0.8877 & 0.8906&0.8999 &0.9072 \\
		& Tucker-R10               & 0.1714 & 2.0658 & $4.98 \times 10^{1}$   & 1.3687 & 0.8488 & 0.9107 &0.9094 &0.9097\\
		& Tucker-R20               & 0.0632 & 3.1399 & $2.47 \times 10^{1}$   & 1.1532 & 0.8254 & 0.8440&0.9165 &0.9187 \\
		&\revised{t-SVD- R5}              & \revised{0.9098} &\revised{0.2767} & \revised{1.55$\times 10^4$}   & \revised{13.090} & \revised{0.8984} & \revised{0.8921} & \revised{0.8973}&\revised{0.9068}\\
		&\revised{t-SVD- R10}              & \revised{0.9256} &\revised{0.1091} & \revised{1.75$\times 10^4$}   & \revised{14.1705} & \revised{0.9183} & \revised{0.9109} & \revised{0.9291}&\revised{0.9085}\\
		&\revised{t-SVD- R20}              & \revised{0.9003} &\revised{0.2167} & \revised{9.92$\times 10^3$}   & \revised{8.8781} & \revised{0.9067} & \revised{0.9001} & \revised{0.9165}&\revised{0.9184}\\
		&\revised{VAE}           & \revised{0.9876} &\revised{0.0175}  &\revised{ 2.39$ \times 10^{5}$}& \revised{99.1187} &\revised{0.8761}  & \revised{0.8891}&\revised{\textbf{1.0000}} &\revised{\textbf{1.0000}}\\
		&\revised{DEC}           & \revised{0.5572} &\revised{0.6980}  & \revised{ 1.77 $\times 10^{4}$}& \revised{3.3112} &\revised{0.8864}  & \revised{0.8890}&\revised{\textbf{1.0000}} &\revised{\textbf{1.0000}}\\
		& \textbf{Metric Learning} & \textbf{1.0000} & \textbf{0.0001} & $\mathbf{1.19 \times 10^{10}}$ & \textbf{14095.10} & 0.8829 & 0.8862& \textbf{1.0000}&\textbf{1.0000}\\
		\bottomrule
	\end{tabular}
}
	\label{tab:gal}
\end{table}

\revised{Metric learning achieves near-perfect Silhouette scores ($\approx$1.0) and near-zero Davies–Bouldin indices on both simulated datasets, reflecting almost complete separation between classes. This behavior is expected in a supervised metric learning setting when class boundaries are well-defined by construction. The extremely large Calinski–Harabasz values and Separation Ratios arise from the explicit maximization of inter-class margins and contraction of intra-class distances, rather than from variance-based or reconstruction-driven objectives optimized by unsupervised baselines.}

This performance comes with the expected trade-off: metric learning exhibits moderately lower Continuity and Trustworthiness scores compared to some unsupervised methods. \revised{This indicates that while global cluster structure is optimized for class separation, some local neighborhood relationships from the original pixel space are intentionally distorted—a known and deliberate consequence of prioritizing semantic class identity over raw pixel similarity.}

\subsubsection*{Qualitative Visualization}

\revised{The quantitative superiority is visually confirmed in Figures~\ref{fig:clus-gal-sry1} and \ref{fig:clus-gal-sry} (see Appendix~\ref{sec:app}).  The metric-learned embeddings form tight, well-separated clusters with minimal overlap, contrasting with the more entangled structures from unsupervised techniques. This clear visual separation underscores the advantage of learning a task-specific distance metric that directly optimizes for class discrimination. Visual comparison of original tensors from Galaxy morphology and Crysyal Structure datasets and their reconstructions using deep laerning models, VAE and DEC are shown in Figure~\ref{fig:cry-deep} and~\ref{fig:gal-deep} (see Appendix~\ref{sec:app}), respectively.}

\begin{figure}
	\centering
		\includegraphics[width=\linewidth]{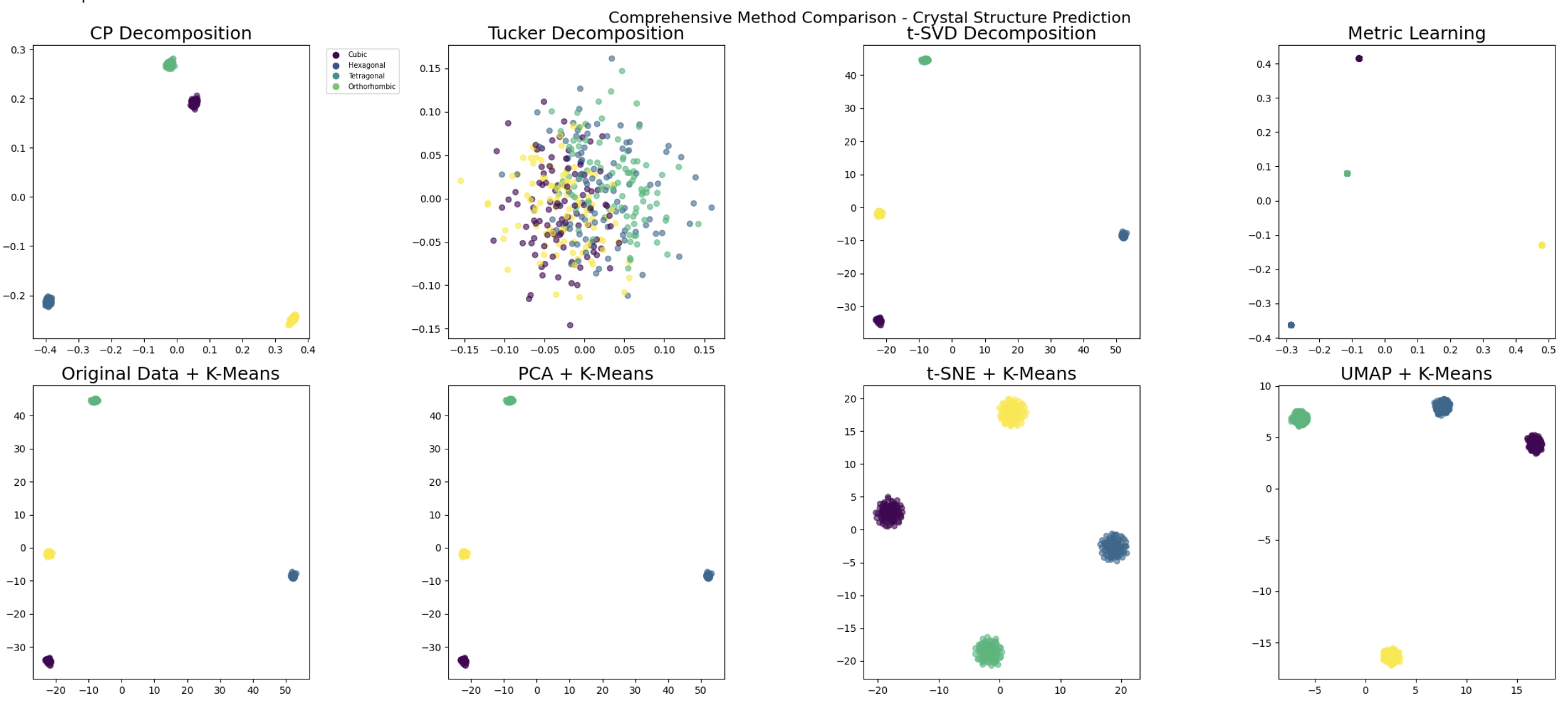}
	\caption{\revised{\textbf{Crystal structure:} Embedding visualizations for Crystal structure dataset: Metric learning produces the most distinct and compact clusters, demonstrating its effectiveness in learning semantically meaningful representations (see also Figure~\ref{fig:clus-gal-sry1}).}}
	\label{fig:clus-gal-sry}
\end{figure}
\begin{figure}[h]
	\centering
	\includegraphics[width=0.9\linewidth]{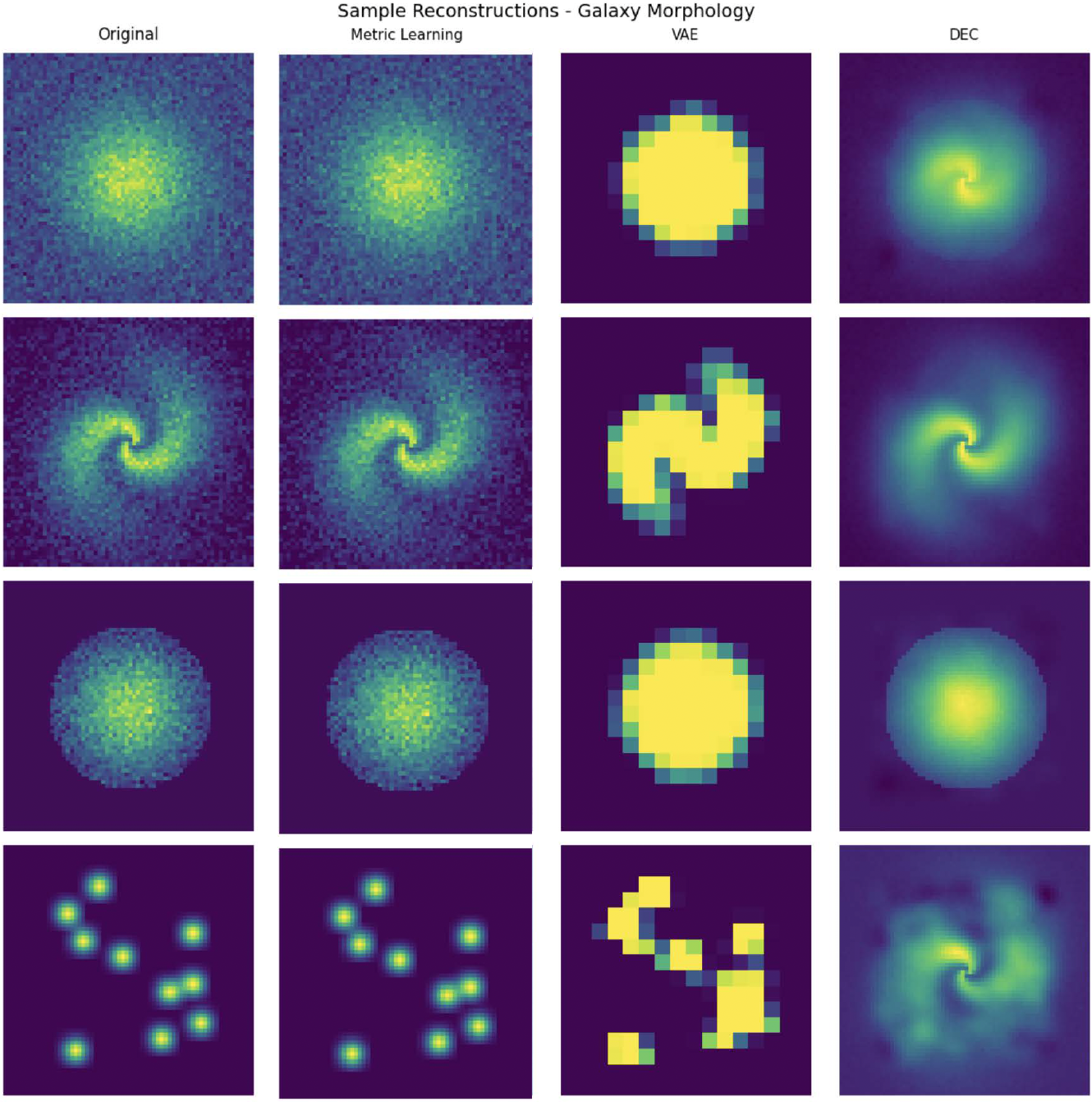}								
	\caption{\revised{\revised{\textbf{Galaxy morphology} }: Visual comparison of original tensor from Galaxy morphology dataset and their reconstructions using deep laerning models, VAE and DEC. All models recived similar pararmerts, number of epochs, normalization and are evaluated on the same metrics (see also Figure~\ref{fig:cry-deep}). }}
	\label{fig:gal-deep}
\end{figure}

These results collectively demonstrate that metric learning excels at capturing the intrinsic categorical structure of simulated data, effectively handling subtle, domain-specific visual differences between classes.

\subsection{Comparison with Tensor Decomposition Methods in Terms of Reconstruction}
In addition to the metric learning approach, we compare against two fundamental tensor decomposition methods: \textit{CP}, \textit{Tucker} \revised {and t-SVD decompositions, as well deep learning models VAE and DEC}. For a third-order tensor $\mathcal{X} \in \mathbb{R}^{I \times J \times K}$ representing our scientific dataset (samples $\times$ height $\times$ width), these decompositions are defined as follows:

A key advantage of the metric learning approach over tensor decomposition methods is its \textit{inherent rank independence}. While tensor decompositions require explicit rank specification, our method automatically learns an optimal latent structure without rank constraints. Both CP and Tucker decompositions (as well as many other low-rank decomposition methods) exhibit strong dependence on the chosen rank $R$. Low ranks may capture insufficient structure, whereas high ranks can lead to overfitting and numerical instability. The metric learning framework, with a latent dimension $d = 64$, learns representations where the \textit{effective rank} is determined by the data complexity rather than pre-specified constraints. Lastly, the convolutional encoder automatically adapts to hierarchical features, capturing both low-rank global structures and high-rank local patterns without explicit rank specification.

\subsubsection{Comparative Analysis Metrics}
We evaluate the decompositions using both reconstruction quality and downstream task performance over the same datasets discussed previously:
\begin{align}
	\text{Reconstruction Error:} \quad & \epsilon = \frac{\|\mathcal{X} - \hat{\mathcal{X}}\|_F}{\|\mathcal{X}\|_F}, \notag \\
	\text{Explained Variance:} \quad & \sigma^2_{\text{explained}} = 1 - \frac{\|\mathcal{X} - \hat{\mathcal{X}}\|_F^2}{\|\mathcal{X}\|_F^2}. \notag 
\end{align}

The fundamental difference lies in how each method handles dimensionality. While tensor decomposition methods require explicit rank specification $R$—typically fixed unless an adaptive approach is implemented \cite{DektorRodgersVenturi2021, sedighin2021adaptive}—they also necessitate rank optimization and multiple rank trials, often guided by theoretical rank estimation. In contrast, the metric learning approach operates with an implicit latent dimension $d$, features an adaptive hierarchical representation, and requires only a single training procedure. This leads to a fully data-driven representation learning process.

A comparison between the metric learning approach and baseline tensor decomposition methods in terms of reconstruction error and explained variance over the same datasets discussed above is provided in Table~\ref{tab:error}. The results for CP, Tucker \revised{and t-SVD} are reported for best-rank approximation for $R=2, 3, 5, 10, 15, 20$. 

\begin{table}
	\centering
	\caption{Comprehensive Performance Comparison of Tensor Methods on Two Datasets}
	\begin{adjustbox}{max width=\textwidth}
		\begin{tabular}{l l c c }
			\toprule
			\textbf{Dataset} & \textbf{Method} & \textbf{Reconstruction Error} & \textbf{Explained Variance} \\
			\midrule
			\multirow{3}{*}{LFW}
			& CP Decomposition     & 0.5300 ($R=20$)&0.7191 \\
			& Tucker Decomposition & 0.4908 ($R=(20, 20, 20)$)&0.7591  \\
			& \revised{t-SVD}&  \revised{0.4908 ($R=20$)}& \revised{0.7590}  \\
			& \revised{VAE}&  \revised{0.6804}& \revised{0.4153}  \\
			& \revised{DEC}&  \revised{0.4835}& \revised{0.5165}  \\
			& \textbf{Metric Learning}      &\textbf{0.0991}&\textbf{0.9901 }  \\
			\midrule
			\multirow{3}{*}{Olivetti }
			& CP Decomposition     &0.6441 ($R=5$)&0.5852   \\
			& Tucker Decomposition & 0.4326 ($R=(20, 20, 20)$)&0.8129  \\
			& \revised{t-SVD}&  \revised{0.4326 ($R=20$)}& \revised{0.8129}  \\
			& \revised{VAE}&  \revised{0.6753}& \revised{0.4190}  \\
			& \revised{DEC}&  \revised{0.4087}& \revised{0.5913}  \\
			& \textbf{Metric Learning }     &\textbf{0.1001 } &\textbf{0.9899}\\
				\midrule
			\multirow{3}{*}{ABIDE}
			& CP Decomposition     & 0.6070 ($R=20$) & 0.6315 \\
			& Tucker Decomposition & 0.4577 ($R=(20, 20, 20)$) & 0.7905  \\
			& \revised{t-SVD}&  \revised{0.4577 ($R=20$)}& \revised{0.7905}  \\
			& \revised{VAE}&  \revised{0.8918}& \revised{0.1122}  \\
			& \revised{DEC}&  \revised{0.5709}& \revised{0.4291}  \\
			& \textbf{Metric Learning}      &\textbf{0.0139} & \textbf{0.9998}\\
			\midrule
			\multirow{3}{*}{Galaxy Morph.}
			& CP Decomposition     & 0.5049 ($R=10$)& 0.7451  \\
			& Tucker Decomposition & 0.4855 ($R=(20, 20, 20)$)& 0.7643  \\
			& \revised{t-SVD}&  \revised{0.7190($R=20$)}& \revised{0.4832}  \\
			& \revised{VAE}&  \revised{0.5572}& \revised{0.4895}  \\
			& \revised{DEC}&  \revised{0.2886}& \revised{0.7114}  \\
			&\textbf{ Metric Learning }     & \textbf{0.0685}  & \textbf{0.9953 } \\
			\midrule
			\multirow{3}{*}{Crystal Struc.}
			& CP Decomposition     & 0.4088 ($R=5$)& 0.8329  \\
			& Tucker Decomposition & 0.3819  ($R=(5, 5, 5)$)& 0.8542\\
			& \revised{t-SVD}&  \revised{0.4440 ($R=5$)}& \revised{0.8026}  \\
			& \revised{VAE}&  \revised{0.8705}& \revised{0.1532}  \\
			& \revised{DEC}&  \revised{0.1851}& \revised{0.8149}  \\
			& \textbf{Metric Learning }     & \textbf{0.0782} & \textbf{0.9938 } \\
			\bottomrule
		\end{tabular}
	\end{adjustbox}
	\label{tab:error}
\end{table}

The reconstruction visualization demonstrates the process of approximating original tensors from their compressed representations obtained through various decomposition methods. For a third-order tensor $\mathcal{X} \in \mathbb{R}^{I \times J \times K}$, each method employs distinct reconstruction mechanisms. 

In {CP decomposition}, the original tensor is reconstructed from rank-1 components through the summation $
\mathcal{\hat{X}} = \sum_{r=1}^{R} \lambda_r \cdot \mathbf{a}_r \circ \mathbf{b}_r \circ \mathbf{c}_r,$
where $\lambda_r$ represents scaling weights and $\mathbf{a}_r, \mathbf{b}_r, \mathbf{c}_r$ are factor matrices capturing different modes of variation. 

The {Tucker decomposition} utilizes a more flexible reconstruction via 
$
\mathcal{\hat{X}} = \mathcal{G} \times_1 \mathbf{A} \times_2 \mathbf{B} \times_3 \mathbf{C},
$
where $\mathcal{G}$ is the core tensor encoding interactions between factors and $\times_n$ denotes the $n$-mode product. 

\revised{The {t-SVD} framework reconstructs the tensor through the tensor product
$
	\mathcal{\hat{X}} = \mathcal{U} * \mathcal{S} * \mathcal{V}^\top,
$
	where $*$ denotes the t-product defined in the Fourier domain, and $\mathcal{S}$ is a f-diagonal tensor containing singular values across frontal slices. This formulation preserves multi-way correlations along the third mode and yields an optimal low-rank approximation under the tensor nuclear norm.} For {metric learning}, reconstruction is achieved through linear regression
$
\mathbf{\hat{X}}_{\text{flat}} = \mathbf{W}\mathbf{Z} + \mathbf{b},
$
where $\mathbf{Z}$ represents the learned embeddings and $\mathbf{W}$ maps these back to the original space. 

\revised{\revised{For the deep learning models, reconstruction is achieved via their respective decoders. For VAE the encoder maps the input tensor into a latent Gaussian distribution, and the decoder reconstructs $\hat{\mathcal{X}}$ by sampling from this distribution and applying non-linear transformations through the network layers. For DEC the encoder produces a latent embedding which is optimized to match a target distribution for clustering, and the reconstruction $\hat{\mathcal{X}}$ is obtained by passing these embeddings through a decoder network trained to minimize reconstruction loss (e.g., mean squared error) while aligning cluster assignments.
		These decoders allow VAE and DEC to approximate the original tensor from learned latent representations, highlighting the difference between reconstruction-focused methods and our metric learning approach, which prioritizes discriminative embeddings over exact reconstruction.}
	}

\revised{The visual comparison reveals that tensor decomposition methods (CP, Tucker, and t-SVD) emphasize structural preservation through explicit algebraic reconstruction, whereas metric learning prioritizes discriminative feature retention over exact reconstruction fidelity.} This distinction reflects their different optimization objectives in capturing brain connectivity patterns. The constructed faces for the Olivetti dataset and cluster visualizations for the LFW dataset are shown in Figure~\ref{fig:rec} and Figure~\ref{fig:rec-lfw}, respectively. \revised{Figure~\ref{fig:abide-deep} shows the functional matrix reconstruction for ABIDE dataset, compared with VAE, and DEC methods. For other compariosn see also Figures~\ref{fig:rec1}--\ref{fig:cry-deep}  in Appendix~\ref{sec:app})}.

\begin{figure}
	\centering
	\includegraphics[width=\linewidth]{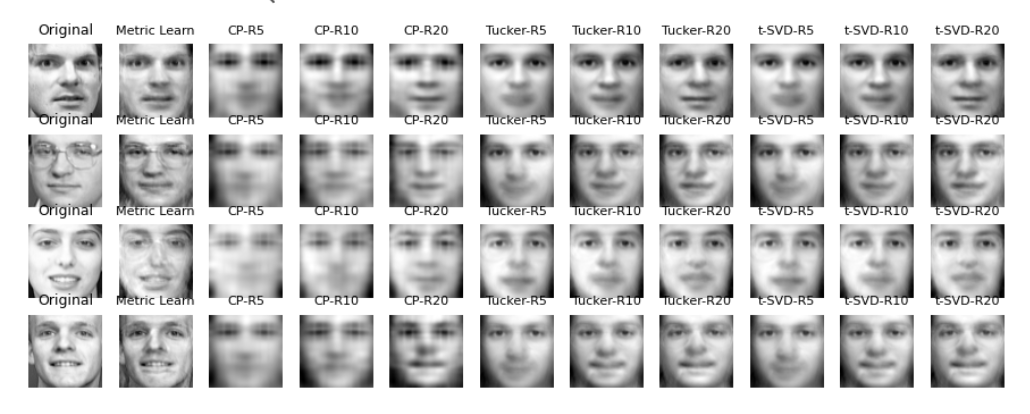}
	\caption{\revised{\revised{\textbf{Olivetti Faces} }: Visual comparison of original tensor from Olivetti face dataset and their reconstructions using different decomposition methods. Here, the results for metric learnign are derived with 50 epochs (see also Figure~\ref{fig:rec1} in Appendix~\ref{sec:app}).} }
	\label{fig:rec}
\end{figure}
\begin{figure}[h]
	\centering
	\includegraphics[width=0.95\linewidth]{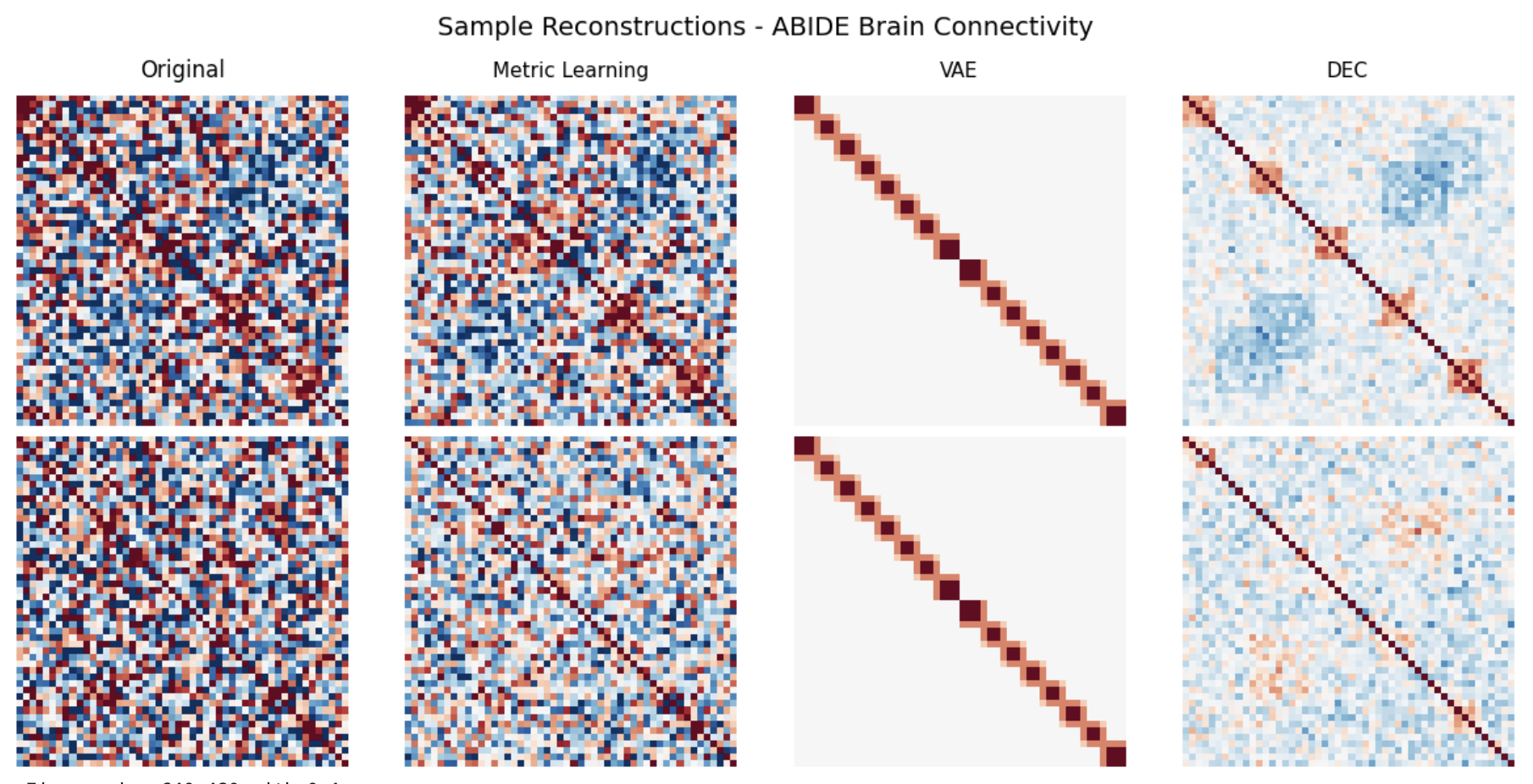}								
	\caption{\revised{\revised{\textbf{ABIDE dataset} }: Visual comparison of original tensor from functional connectivity matrices of ABIDE dataset and their reconstructions using deep laerning models, VAE and DEC. All models recived similar pararmerts, number of epochs, normalization and are evaluated on the same metrics. }}
	\label{fig:abide-deep}
\end{figure}

\begin{figure}[h]
	\centering
	\includegraphics[width=0.9\linewidth]{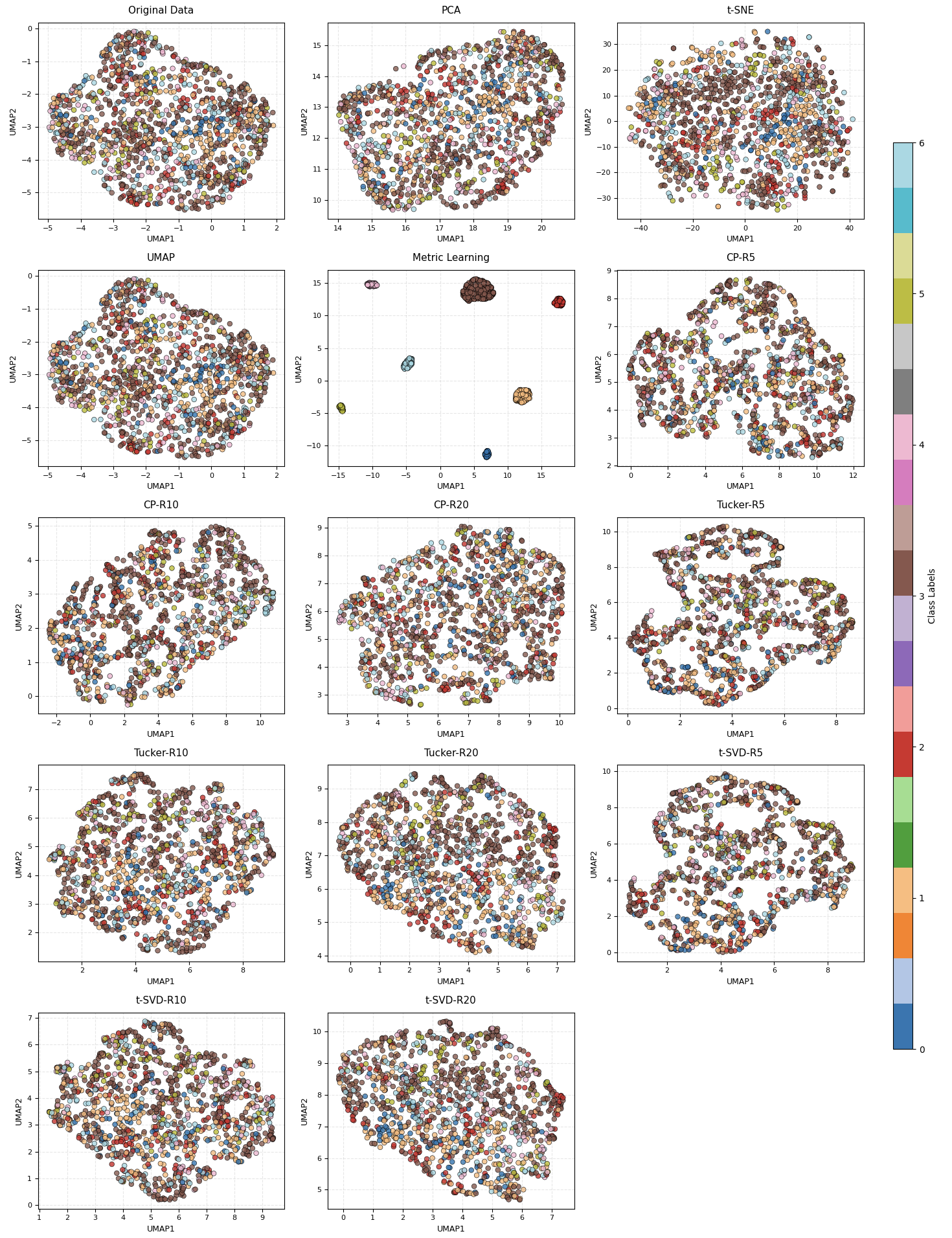}
	\caption{\revised{\textbf{LFW Dataset} }: Two-dimensional manifold visualizations of face embeddings from the LFW dataset generated by different dimensionality reduction and tensor decomposition methods. Each subplot shows the embedding space where points represent individual face images colored by identity, demonstrating the clustering performance and separability achieved by each method \revised{(similar figure for Olivetti Faces dataset is added to Appendix~\ref{sec:app}, Figure~\ref{fig:t-svd-oli})}. }
	\label{fig:rec-lfw}
\end{figure}

Figure~ \ref{fig:abide-ml} demonstrates the two-dimensional manifold visualizations of brain embeddings, comparing original data with CP decomposition, Tucker \revised{ and t-SVD decompositions}, and metric learning approaches. All methods successfully preserve the symmetric structure and hub connectivity patterns essential for neurological analysis.

\begin{figure}[h]
	\centering
				\includegraphics[width=0.295\linewidth]{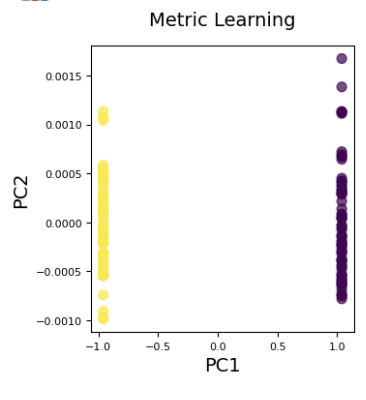}
						\includegraphics[width=0.3\linewidth]{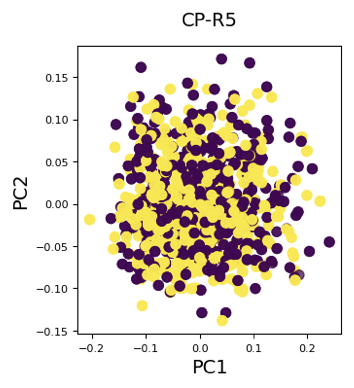}
											\includegraphics[width=0.3\linewidth]{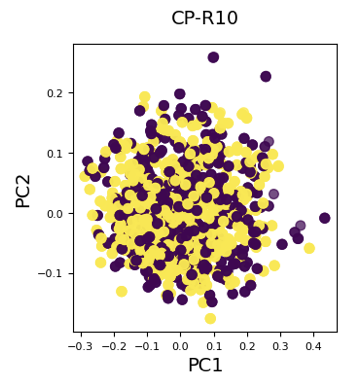}\\
								\includegraphics[width=0.3\linewidth]{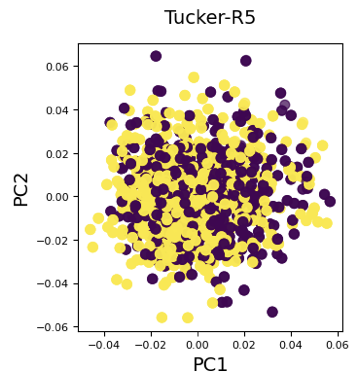}
									\includegraphics[width=0.3\linewidth]{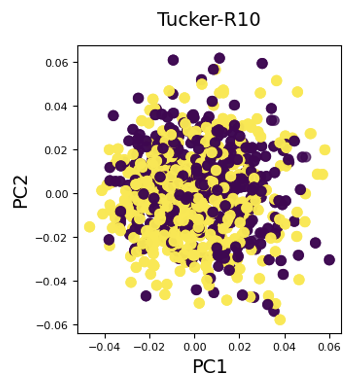}
									\includegraphics[width=0.3\linewidth]{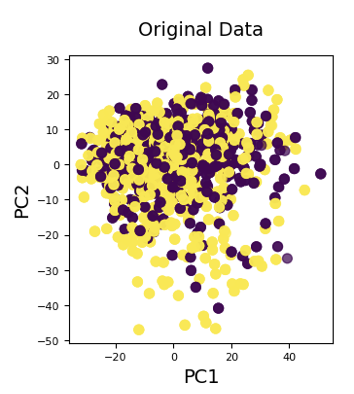}	\\
									\includegraphics[width=0.9\linewidth]{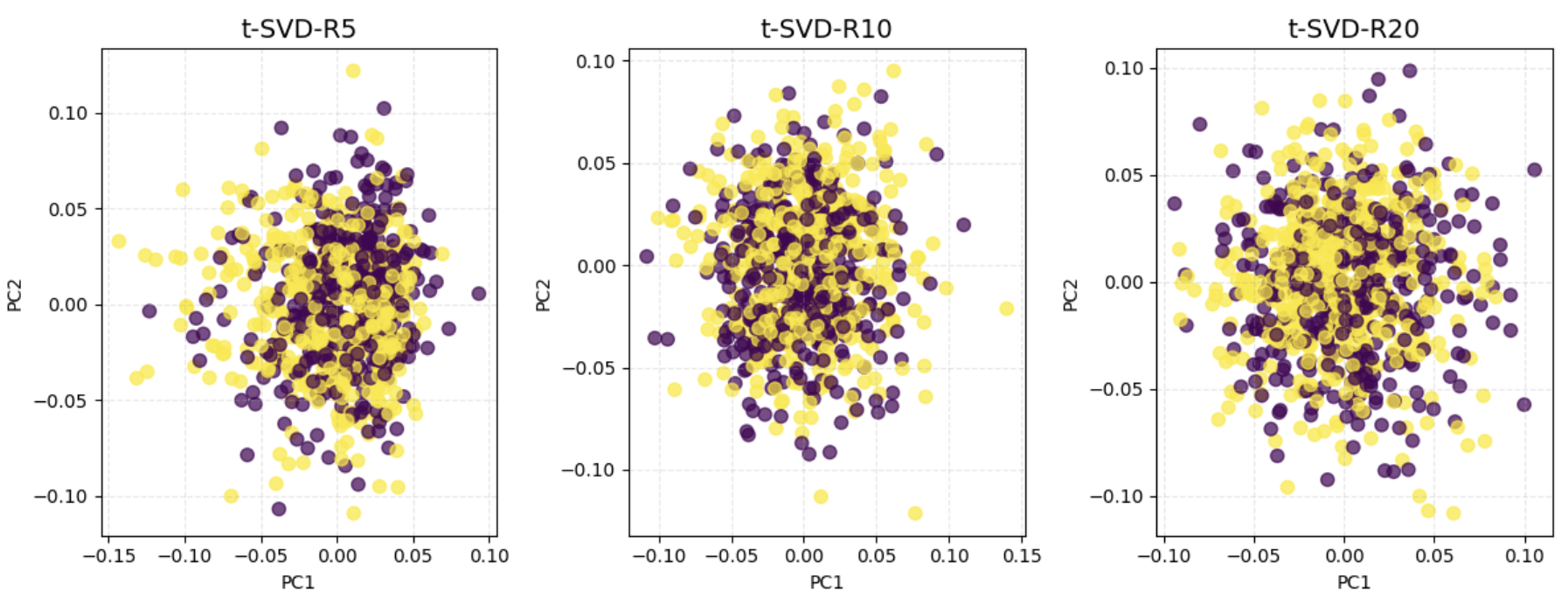}								
	\caption{\revised{\textbf{ABIDE dataset} }: Two-dimensional manifold visualizations of brain embeddings from the ABIDE dataset generated by different dimensionality reduction and tensor decomposition methods. Each subplot shows the embedding space where points represent individual face images colored by identity, demonstrating the clustering performance and separability achieved by each method. The best reported model accuracy for CP with $R=10$ is 0.6000, for \revised {Tucker and t-SVD decompostion with $R=(20, 20, 20)$ and $R=20$ respectively, is around 0.7000,} and for Metric learning is 0.9900.  }
	\label{fig:abide-ml}
\end{figure}

\subsubsection{Empirical Rank Robustness}

Our experiments demonstrate that while tensor decomposition performance varies with rank choice, the metric learning approach maintains consistent performance across different dataset complexities. This robustness stems from non-linear transformations that can capture complex interactions without explicit high-rank decomposition, hierarchical feature learning through convolutional layers that automatically organize features by complexity and task-oriented optimization where the representation is learned specifically for the scientific classification task.

This rank independence makes our approach particularly valuable for scientific datasets where the intrinsic dimensionality may be unknown or vary across different regions of the data space.

\subsection{Comparative Analysis: Metric Learning vs. Transformer Architectures}
\label{sec:comparison}

This section presents a comprehensive comparative analysis between the proposed Metric Learning method and Transformer-based models across five diverse datasets spanning brain connectivity, face recognition, and scientific image classification. \textit{The objective is not to claim universal superiority but  to delineate the specific scenarios, particularly data-constrained environments, where our method offers distinct advantages over Transformer architectures \cite{vaswani2017attention}.} Experiments were conducted on the same datasets as above, namely ABIDE ($111 \times 111$ connectivity matrices), LFW faces ($50 \times 37$), Olivetti faces ($64 \times 64$), Galaxy morphology ($64 \times 64$), and Crystal structure ($64 \times 64$), demonstrating the generalizability of our findings across multiple domains.

\subsubsection{Limitations of Transformer Models}

Our experiments reveal limitations of Transformer architectures across all five datasets.
The standard Transformer's requirement for fixed-length input sequences proved incompatible with small-scale training. For ABIDE I ($111 \times 111 = 12,321$ features), LFW ($50 \times 37 = 1,850$ features), and image datasets ($64 \times 64 = 4,096$ features), the flattened sequence lengths caused consistent failures when batch sizes were smaller than feature dimensions.
 The self-attention mechanism, designed for sequence modeling, proves inefficient for small batches of high-dimensional data, where the sequence length (flattened matrix/image features) consistently exceeds practical batch sizes in data-scarce domains. Thsi was also confrimed through a data efficiency analysis.

\subsubsection{Advantages of the Proposed Metric Learning Framework}

The proposed Metric Learning method demonstrated consistent operational success and competitive performance across all datasets, through geometric learning principles.
The proposde method executed successfully on all datasets across all experimental conditions.
  The most advantage emerged in small-data regimes where Transformers remained inapplicable.

  While traditional methods (Random Forest, SVM) often achieved high performance ($100\%$ on several datasets), metric learning provided competitive results with the added benefit of learned semantic embeddings. The convolutional encoder and triplet loss framework operated successfully across varying input dimensions ($50\times37$ to $111\times111$) without architectural modifications, proving adaptable to diverse data geometries.

\subsubsection{Data Efficiency Analysis}

The systematic evaluation across small dataset sizes revealed distinct operational boundaries, as summarized in Table \ref{tab:data_efficiency}:

\begin{table}[htbp]
	\centering
	\caption{Data Efficiency Comparison Across Dataset Sizes by Accuracy Scores}
	\label{tab:data_efficiency}
	\begin{tabular}{lcccc}
		\hline
		\textbf{Dataset} & \textbf{Size} & \textbf{Metric Learning} &  \textbf{PCA+SVM}& \textbf{Transformer}  \\
		\hline
		ABIDE  & 64& 100.0\%  & 20.0\%& NA  \\
		& 128 & 100.0\%  & 0.0\%& NA  \\
		& 256 & 95.0\%  & 60.0\% & NA \\
		\hline
		LFW Faces & 64 & 92.2\% & 80.0\% & NA  \\
		& 128 & 91.8\%  & 64.1\% & NA\\
		& 256 & 86.6\%  & 61.0\% & NA \\
		\hline
		Olivetti Faces & 128 & 82.2\% & 0.0\%&NA   \\
		& 256 & 90.1\%  & 70.1\% & NA\\
		\hline
		Galaxy Morphologies & 16 & 100.0\% & 80.0\% & NA \\
		& 64 & 100.0\% & 100.0\% & NA  \\
		& 256 & 100.5\%  & 100.0\% & NA \\
		\hline
		Crystal Structures& 64& 100.0\%  & 100.0\% & NA \\
		& 128 & 100.0\%  & 100.0\%& NA \\
		& 256 & 100.0\%  & 100.0\% & NA \\
		\hline
		\hline
	\end{tabular}
\end{table}

For $n < 1000$ Metric learning provides the most reliable approach, balancing operational success with meaningful performance. While traditional methods (Random Forest, SVM) offer excellent performance with minimal complexity over well-structured data, metric learning provides semantically rich embeddings valuable for downstream tasks.

This work establishes metric learning not as a universal solution, but as a reliable paradigm for the widespread class of problems in computational neuroscience and scientific computing where data scarcity is the norm rather than the exception. The method's consistent operational success across diverse domains, combined with its ability to learn meaningful representations from limited data, positions it as an essential tool in the modern machine learning toolkit for scientific applications.

\section{Conclusion}\label{sec:con}

\subsection*{Advantages of the Metric Learning Framework}\label{sec:ad}
\revisedd{The proposed metric learning framework demonstrates strong performance for tasks that require semantically meaningful embeddings, such as clustering and representation analysis. 
	In contrast to classical dimensionality reduction techniques—including CP, Tucker, and t-SVD—which primarily optimize reconstruction error under fixed-rank constraints, our approach explicitly optimizes similarity structure through task-driven loss functions. 
	This design yields representations that are particularly well-suited for downstream tasks involving discrimination, retrieval, and semantic organization of data.\\
	The flexibility of the framework avoids explicit rank selection, allowing the embedding dimension to adapt to data complexity, while modular strategies such as data augmentation, pre-training, and ensemble modeling further improve robustness and generalization. 
	Although the deep network provides substantial modeling capacity, the observed performance improvements are largely attributable to the carefully designed metric learning objectives that directly encode semantic structure. 
	Overall, these properties make the proposed framework a practical and adaptable tool for applications in which meaningful representation learning is prioritized over exact signal reconstruction.
}

\subsection*{ Future Work}
\label{sec:limitations}

While the metric learning framework demonstrates strong performance, several limitations present opportunities for future improvement.

The approach shows sensitivity to class imbalance, where minority classes may receive insufficient representation during triplet mining. Computational overhead from online triplet mining presents scalability challenges, particularly with large batch sizes. Additionally, performance with extremely large numbers of classes requires further validation, and theoretical generalization bounds warrant deeper investigation.

To address these limitations, we plan to develop class-aware triplet mining strategies for imbalanced data and explore more efficient proxy-based losses to reduce computational costs. Scaling the framework to massively multi-class problems and establishing stronger theoretical foundations for generalization guarantees represent key research priorities. These enhancements would further strengthen the framework's applicability across diverse domains.

\section*{\revised{Code Availability \&} Data Statement}
	\revised{An implementation of the proposed method is available at \url{https://github.com/mbagherian/No-Rank-Tensor-Decomposition.}}\\

The datasets used in the preparation of this manuscript are as follows:
\begin{enumerate}
	\item Publicly available \textit{Labeled Faces in the Wild (LFW)} dataset \url{https://www.kaggle.com/datasets/jessicali9530/lfw-dataset} \cite{huang2008labeled}.
	\item Publicly available \textit{Olivetti Faces} dataset (credit to AT\&T Laboratories Cambridge \url{http://www.cl.cam.ac.uk/research/dtg/attarchive/facedatabase.html} and  \cite{pedregosa2011scikit})
	\item Publicly available \textit{Autism Brain Imaging Data Exchange (ABIDE)} dataset \url{https://fcon_1000.projects.nitrc.org/indi/abide/} \cite{di2017enhancing}.
	\item Synthetically simulated \textit{Galaxy Morphology} dataset.
	\item Synthetically simulated \textit{Crystal Structures} dataset.
\end{enumerate}
\section*{Author Contribution Declarations}
M.B. developed the research idea, performed all analyses and experiments, and prepared the manuscript.

\section*{Funding}
This work was supported by the startup fund (Grant number ASE016) provided by the College of Science and Engineering (CoSE) at Idaho State University.

\bibliographystyle{ieeetr}
\bibliography{ref}
\newpage 
\section{Appendix}\label{sec:app}
\begin{figure}[h]
	\centering
	\includegraphics[width=\linewidth]{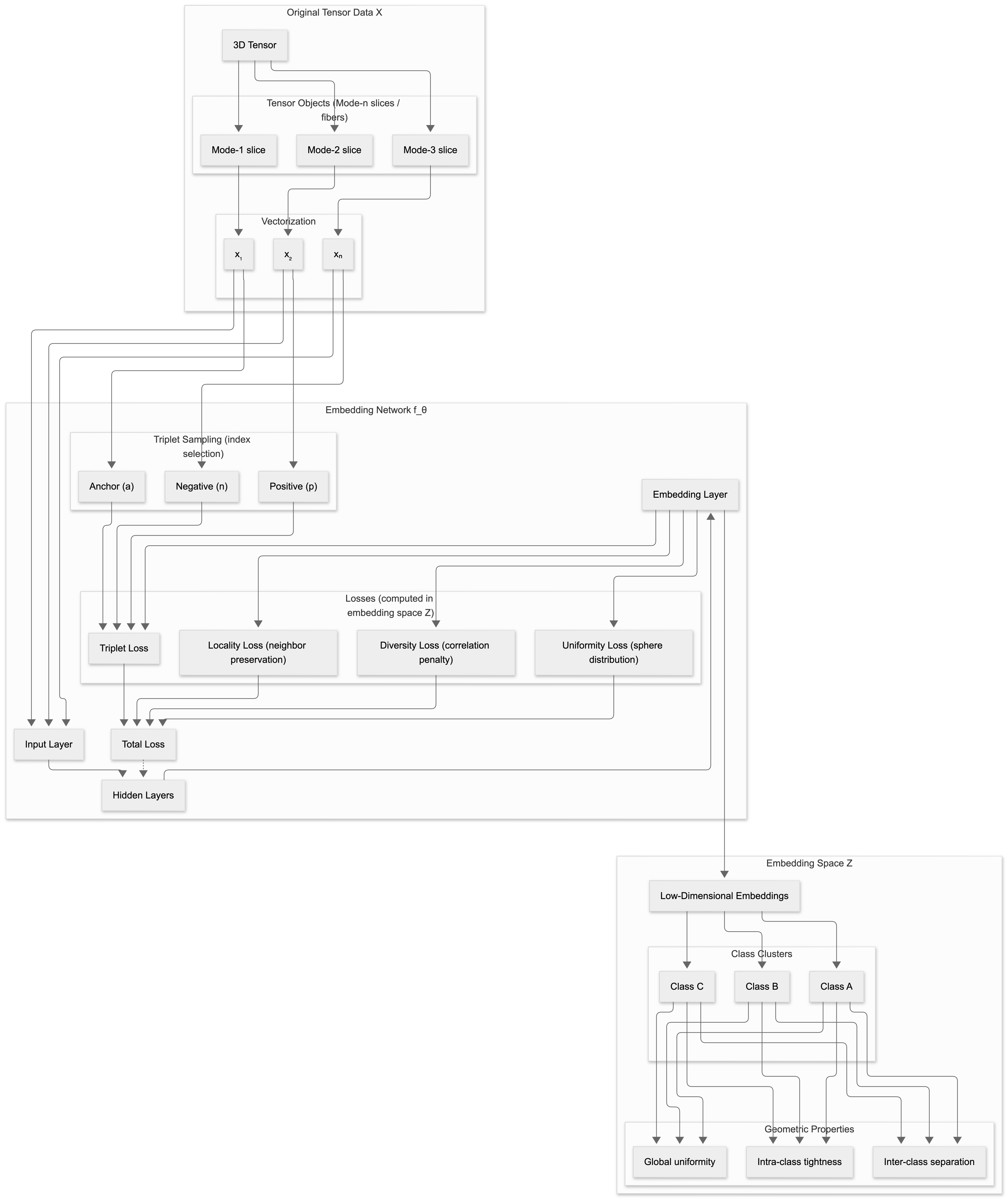}
	\caption{\revised{\textbf{Model Overview}: Detailed overview of the proposed no-rank metric learning framework for tensor data. Each mode-$n$ slice of the tensor is mapped through a neural network to an embedding space. Triplet-based losses (anchor, positive, negative) and regularization terms (diversity, uniformity, locality preservation) shape a semantically meaningful embedding without requiring explicit rank constraints.}}
	\label{fig:modelfull}
\end{figure}

\begin{figure}[h]
	\centering
	\includegraphics[width=0.9\linewidth]{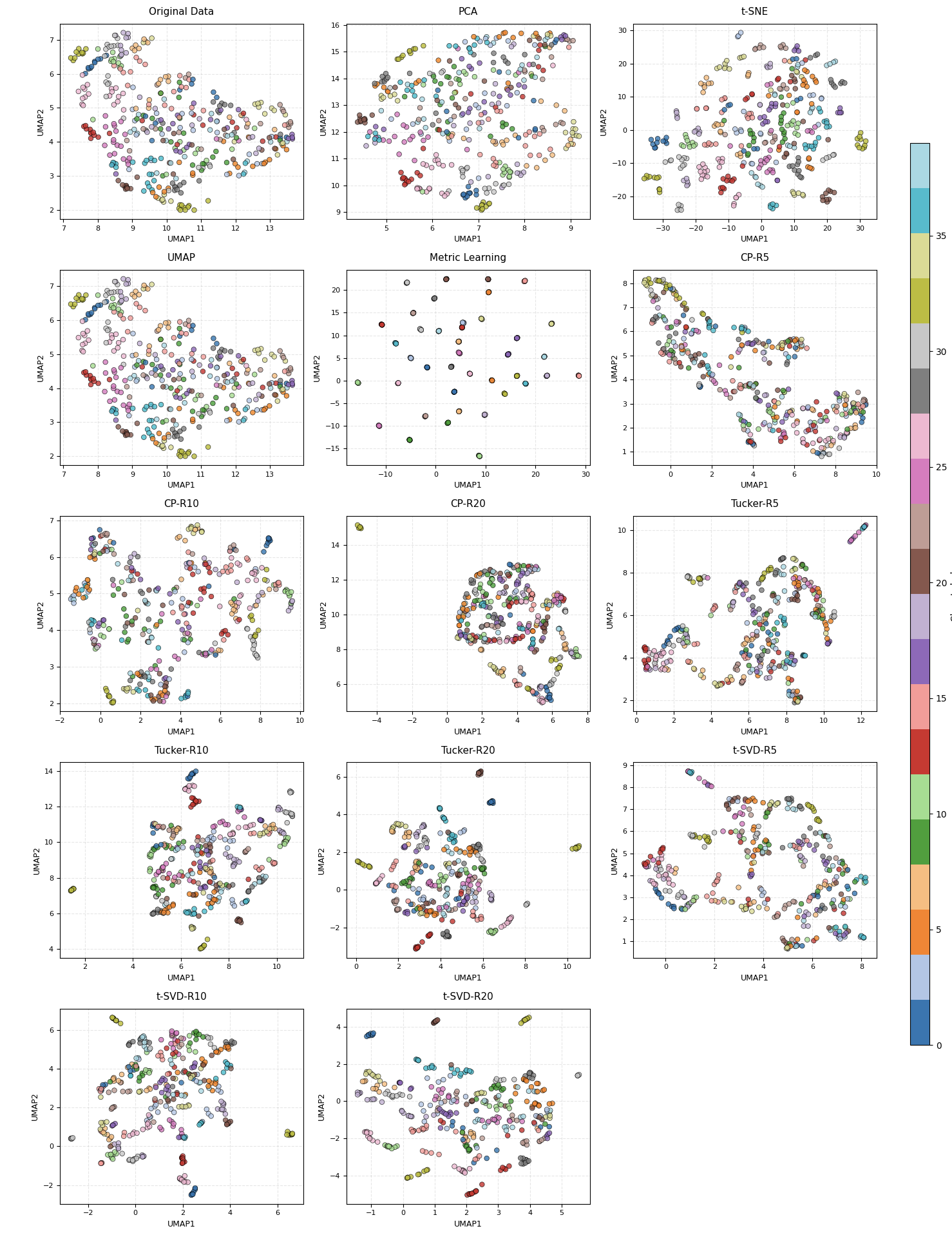}
	\caption{\revised{\textbf{Olivetti Faces: }Two-dimensional manifold visualizations of face embeddings from the Olivetti Faces dataset generated by different dimensionality reduction and tensor decomposition methods. Each subplot shows the embedding space where points represent individual face images colored by identity, demonstrating the clustering performance and separability achieved by each method. }}
	\label{fig:t-svd-oli}
\end{figure}

\begin{figure}
	\centering
	\includegraphics[width=\linewidth]{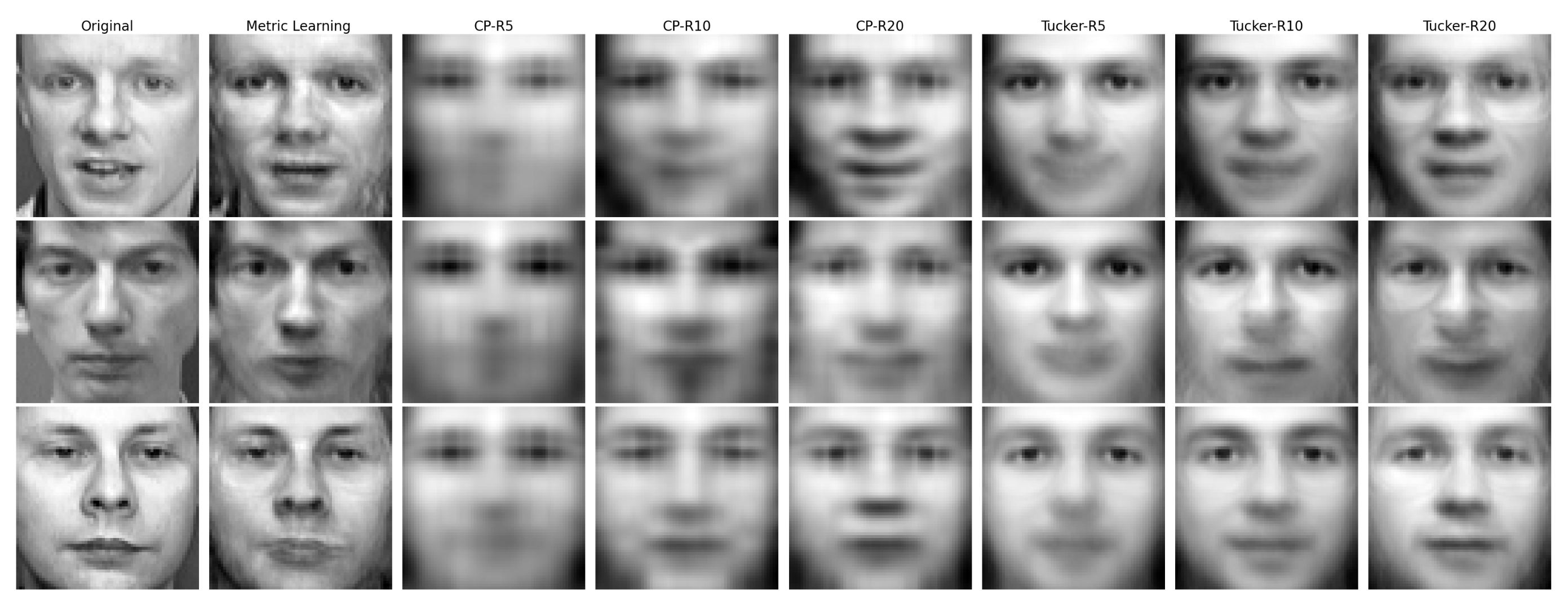}
	\caption{\textbf{Olivetti Faces: } Visual comparison of original tensor from Olivetti face dataset and their reconstructions using different decomposition methods. Here, the results for metric learnign are derived with 50 epochs. }
	\label{fig:rec1}
\end{figure}

\begin{figure}[h]
	\centering
	\includegraphics[width=0.7\linewidth]{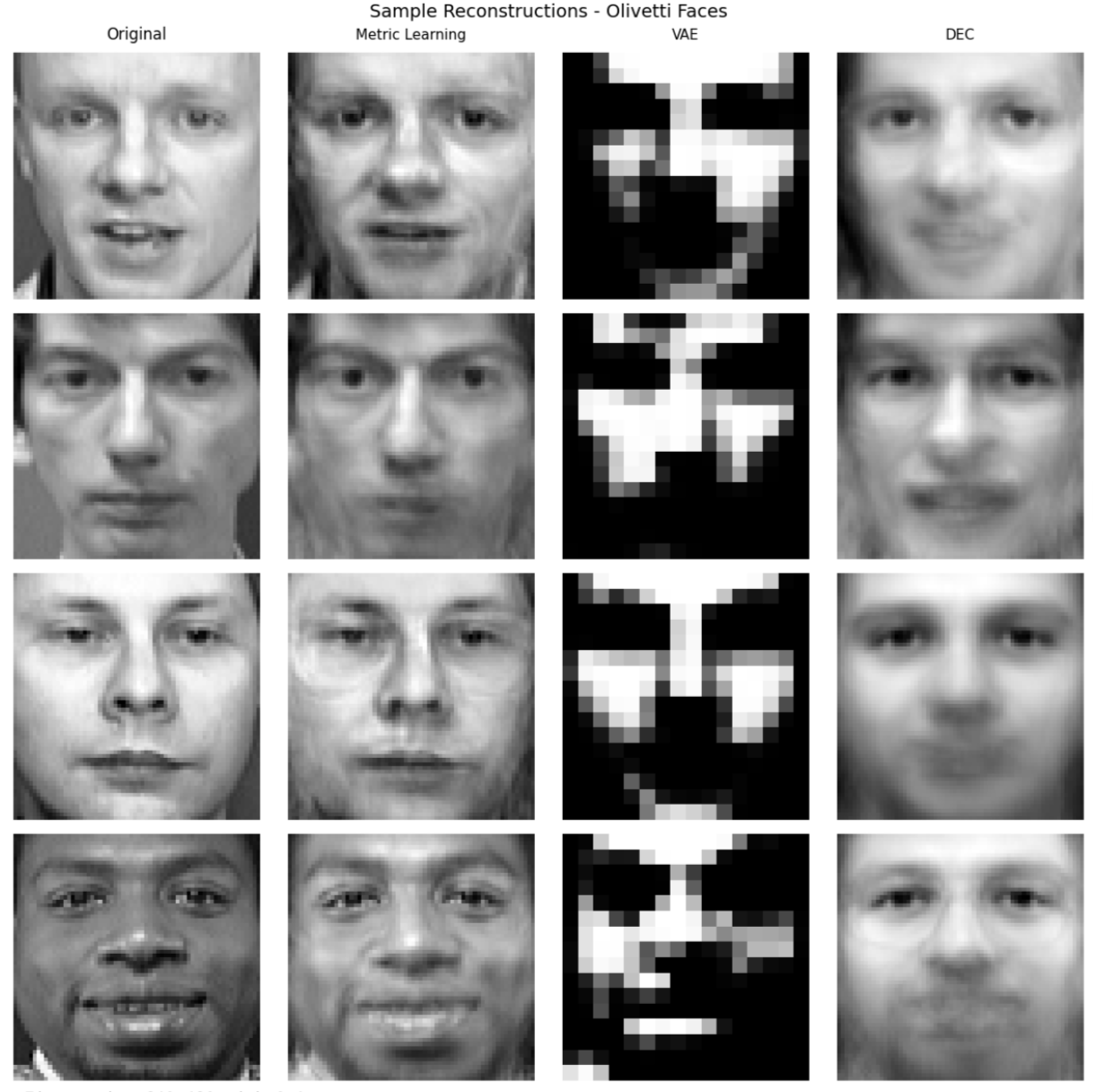}								
	\caption{\revised{\textbf{Olivetti Faces: }Visual comparison of original tensor from Olivetti face dataset and their reconstructions using deep laerning models, VAE and DEC. All models recived similar pararmerts, number of epochs, normalization and are evaluated on the same metrics. }}
	\label{fig:oli-deep}
\end{figure}

\begin{figure}
	\centering
	\includegraphics[width=\linewidth]{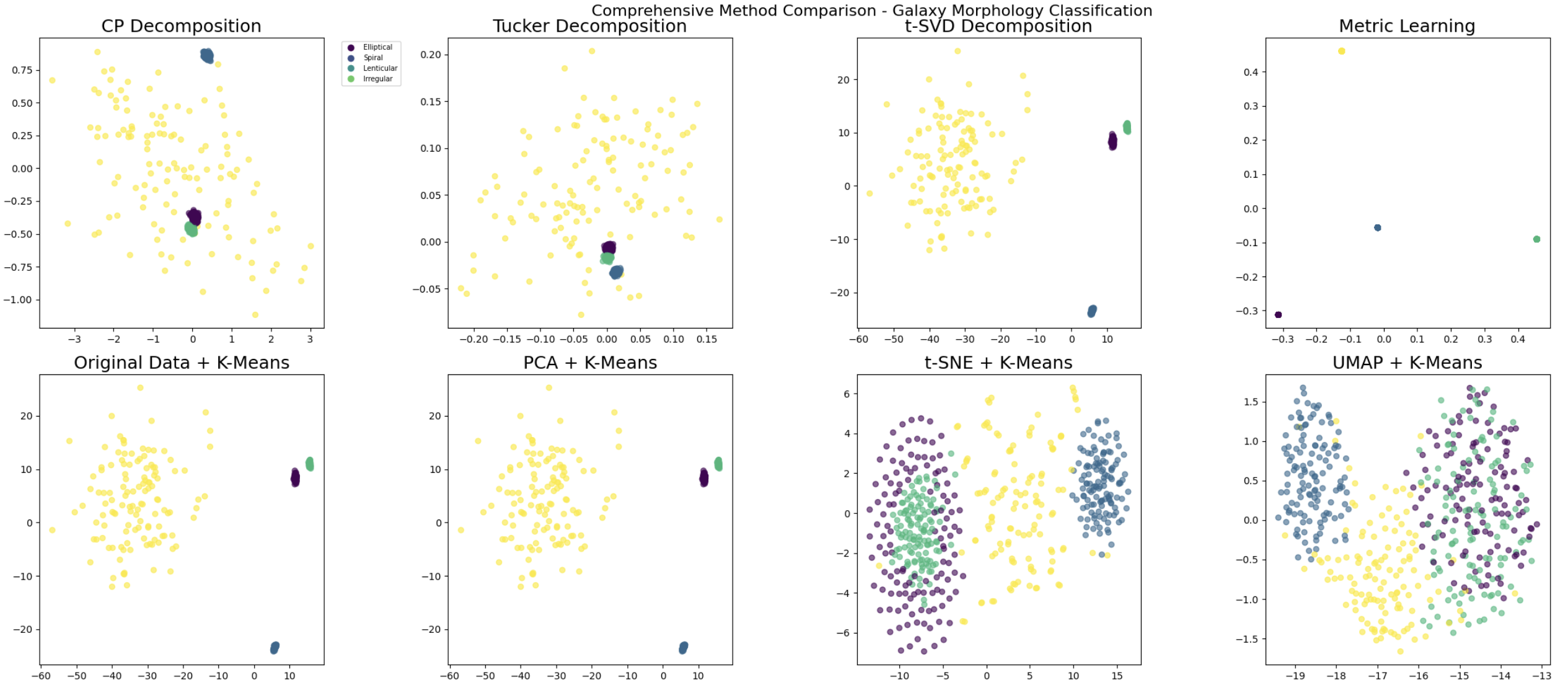}\\
	\caption{\revised{\textbf{Galaxy morphology: } Embedding visualizations for Galaxy morphology datase}: Metric learning produces the most distinct and compact clusters, demonstrating its effectiveness in learning semantically meaningful representations.}
	\label{fig:clus-gal-sry1}
\end{figure}

\begin{figure}[h]
	\centering
	\includegraphics[width=0.7\linewidth]{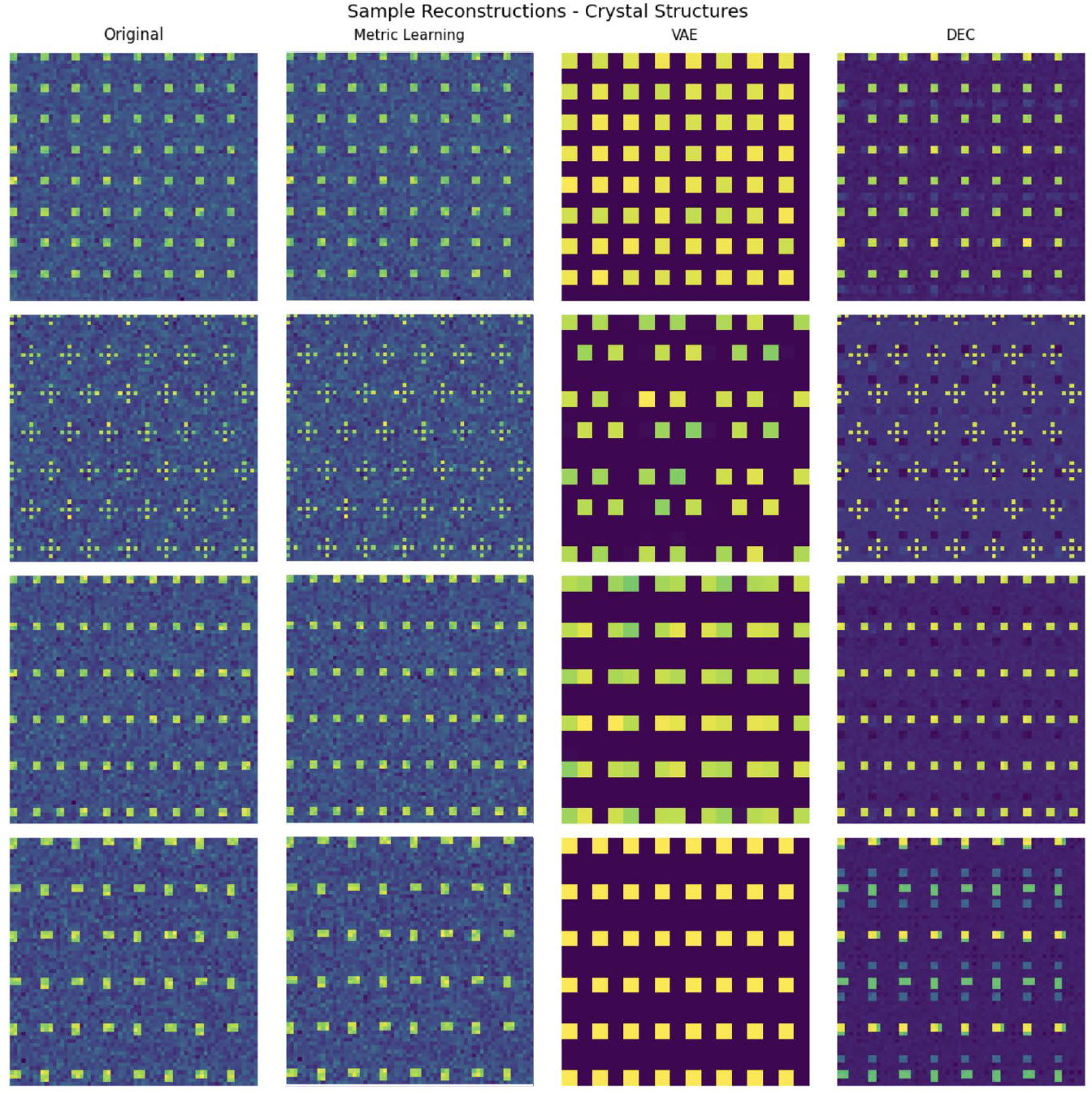}								
	\caption{\revised{\textbf{Crystal structures: }Visual comparison of original tensor from Crystal structure dataset and their reconstructions using deep laerning models, VAE and DEC. All models recived similar pararmerts, number of epochs, normalization and are evaluated on the same metrics. }}
	\label{fig:cry-deep}
\end{figure}

\end{document}